\theoremstyle{definition}
\theoremstyle{remark}
\newtheorem*{remark}{Remark}
\theoremstyle{plain}
\newtheorem{theorem}{Theorem}
\newtheorem{lemma}{Lemma}
\newtheorem*{theorem3}{Theorem 3}
\newtheorem*{theorem1}{Theorem 1}
\renewcommand{\@oddhead}{}
\renewcommand{\@evenhead}{}
\renewcommand{\@oddfoot}{\hfil\thepage\hfil}
\renewcommand{\@evenfoot}{\hfil\thepage\hfil}
\title{Knowledge distillation through geometry-aware representational alignment}
\author{
  Prajjwal Bhattarai\textsuperscript{1,2} \quad 
  Mohammad Amjad\textsuperscript{1} \quad
  Dmytro Zhylko\textsuperscript{1,2} \quad
  Tuka Alhanai\textsuperscript{1} \\
  \textsuperscript{1} New York University Abu Dhabi \\
  \textsuperscript{2} New York University, Tandon School of Engineering \\
  \texttt{\{pb2276,maa9511,dz2449,twa2013\}@nyu.edu}
}
\begin{document}
\maketitle

\begin{abstract}
Knowledge distillation is a common paradigm for transferring capabilities from  larger models to smaller ones. While traditional distillation methods leverage a probabilistic divergence over the output of the teacher and student models, feature-based distillation methods often minimize variants of Euclidean norms between the hidden layer representations. The main goal is for the student to mimic the structure of the feature space of the teacher. In this work, we theoretically show that existing feature distillation methods, such as projection based mean squared loss or Centered Kernel Alignment (CKA), cannot capture the feature structure, even under zero loss. We then motivate the use of \textit{Procrustes distance} and the Frobenius norm of \textit{Feature Gram Matrix}, distances already common in the context of measuring representational alignment, as distillation losses. We show that feature distillation through our method showcases statistically significant improvement in distillation performance across language models families (BERT and OPT) in classification and instruction-following tasks by up to 2 percentage points, showcasing the potential of integrating feature geometry into existing distillation methods. \footnote{\url{https://github.com/x-labs-xyz/feature-distillation}}
 \end{abstract}

\section{Introduction}
While large models are achieving state-of-the-art results across almost all vision and language tasks, the emergent abilities these models exhibit \citep{wei2022emergent, liang2023holistic} are often inaccessible to the public as a result of their inherent size and operating costs. Knowledge Distillation (KD) is one of the many paradigms that aim to bridge the gap between size and performance by inducing ways of transferring knowledge and abilities from a larger, complex model (teacher) to a smaller and accessible model (student).

Assuming white-box access (weights and intermediate representations) to the teacher model during the training process, we can leverage alignment of the teacher-student model through not just their outputs, but also their hidden representations. \citep{sanh2020distilbert, liang2023more,sun2019patient,mukherjee-hassan-awadallah-2020-xtremedistil}. These methods, often called \textit{feature distillation}, construct a loss function that quantifies the informational gap between the teacher and student model representations.

A longstanding challenge in feature distillation is the dimension mismatch between the student and teacher representations. The standard approach mitigates this issue by learning a linear projection from the student’s representation space to the teacher’s, enabling the application of simple similarity measures such as the Euclidean distance \citep{jiao-etal-2020-tinybert}. More recent work on feature distillation \citep{dasgupta2025improving} has used Centered Kernel Alignment (CKA) \citep{kornblith2019similarity}, a kernel based measure originally introduced to compute the (dis)similarity between deep learning models. CKA operates on the Gram matrix between features, and is thus agnostic to the dimension mismatch problem. CKA comes from a wider literature in representational alignment \citep{sucholutsky2023getting}, where various other functions for comparing the similarity of neural networks have been proposed. \citep{klabunde2023similarity}. We propose using Procrustes distance \citep{Schonemann_1966, Williams2021GeneralizedSM} and the Frobenius norm of the Gram matrix differences \citep{yin2018dimensionality}, alternative methods that have been proposed in the representational alignment literature. We justify their use in feature distillation through a theoretical framework, demonstrating that they more faithfully capture the geometric alignment of feature representations compared to CKA and projection-based methods.

While the representations generated by language models can vary based on a myriad of factors \citep{lampinen2024learned}, it has been noticed that relative representations (angles and inner products) are preserved for models trained on the same task with the same data. \citep{moschella2022relative}. Thus, our definition of \textit{feature geometry} is equivalent to that of a spherical geometry on a unit normed sphere. We question the hypothesis that task-specific feature distillation is correlated with the preservation of this feature geometry between the student and teacher models.To rigorously assess this hypothesis, we conduct a theoretical examination of prevalent feature distillation objectives, complemented by empirical studies on their effectiveness in task-specific language model distillation.

Our core contributions are summarized below:

\begin{itemize}
    \item We show, theoretically and through a synthetic experiment, that optimizing over CKA and linear projection does not always correlate with the preservation of geometry in feature representations. In contrast, we show that Procrustes distance is a better proxy for feature geometry alignment.

    \item We show that Procrustes distance outperforms CKA and other feature distillation baselines on classification tasks using BERT.

    \item We show that optimizing over Procrustes and the Frobenius norm of the difference between Feature Gram matrices outperforms CKA in instruction-following task using OPT.
\end{itemize}

\section{Background}
\subsection{Knowledge Distillation}

The distillation process is usually done by gradient descent on a loss that minimizes the student target loss, as well as a secondary loss that incorporates the difference in the "knowledge" being transferred from the teacher to student model. Specifically, it takes the form of
    $\mathcal{L} = \mathcal{L}_{\text{CE}} (f_S(x), y) + \mathcal{L}_{KD} \left( f_T(x), f_S(x) \right)$ where $f_S(x)$ and $f_T(x)$ are last-layer logits of the student and teacher model respectively, $y$ is the true output labels, $\mathcal{L}_{KD}$ is the KL divergence between teacher and student logits and $\mathcal{L}_{CE}$ is the cross entropy of the student output.

Traditional knowledge-distillation methods have used either the forward \citep{sanh2020distilbert, hinton2015distilling} or reverse \citep{agarwal2024onpolicy, gu2024minillm} KL divergence as the measure of difference between the output logits. The large vocabulary size of modern language models means that minimizing probabilistic divergences over them can often lead to undesirable behaviors. In particular, minimizing KL divergence leads to "mode-covering" behavior when the student assigns high probability to the token corresponding to the mode of the teacher instead of high probability tokens. On the other hand, minimizing for reverse KL divergence can lead to the student model placing exceptionally high probability on the top token, resulting in a lack of diversity in the generated output.  Integrating information from intermediate representations can help alleviate some of these problems, resulting in the application of feature distillation

For feature distillation, it is natural to assume that $\mathcal{L}_{KD}$ can take the form of any vector $p$-norm. Variants of Euclidean norms, including cosine-similarity \citep{sanh2020distilbert}, normalized mean-square, \citep{liang2023more, sun2019patient} and $\ell^2$ norms \citep{mukherjee-hassan-awadallah-2020-xtremedistil} have been used. A variety of higher order projection methods on Euclidean spaces can be used to bridge the dimension mismatch problem. However, the necessity to learn a linear projection is a significant drawback. Similarly, learned linear projections and Euclidean distances might be too powerful to reflect the geometry of neural representational spaces, which are invariant to permutations or  orthogonality in the space of representations. \citep{kornblith2019similarity,rombach2020invariances}.

More recently, Centered Kernel Alignment (CKA) has been proposed instead of projection based methods for distillation of language models \citep{dasgupta2025improving}. CKA works on the Gram matrix of feature representations, thus avoiding the necessity to learn any additional projection methods. CKA is also endowed with useful properties such as invariances to orthogonal transformations and isotropic scalings, which reflect the symmetries of the representational space. \citet{dasgupta2025improving} show that CKA does better than learned projection across model sizes and tasks. The distinction between projection-based and alignment-based feature distillation losses remains largely unexplored beyond end-to-end empirical comparisons.

\subsection{Feature geometry of language models}
\begin{wrapfigure}{r}{0.55\linewidth}
    \centering
    \includegraphics[width=\linewidth]{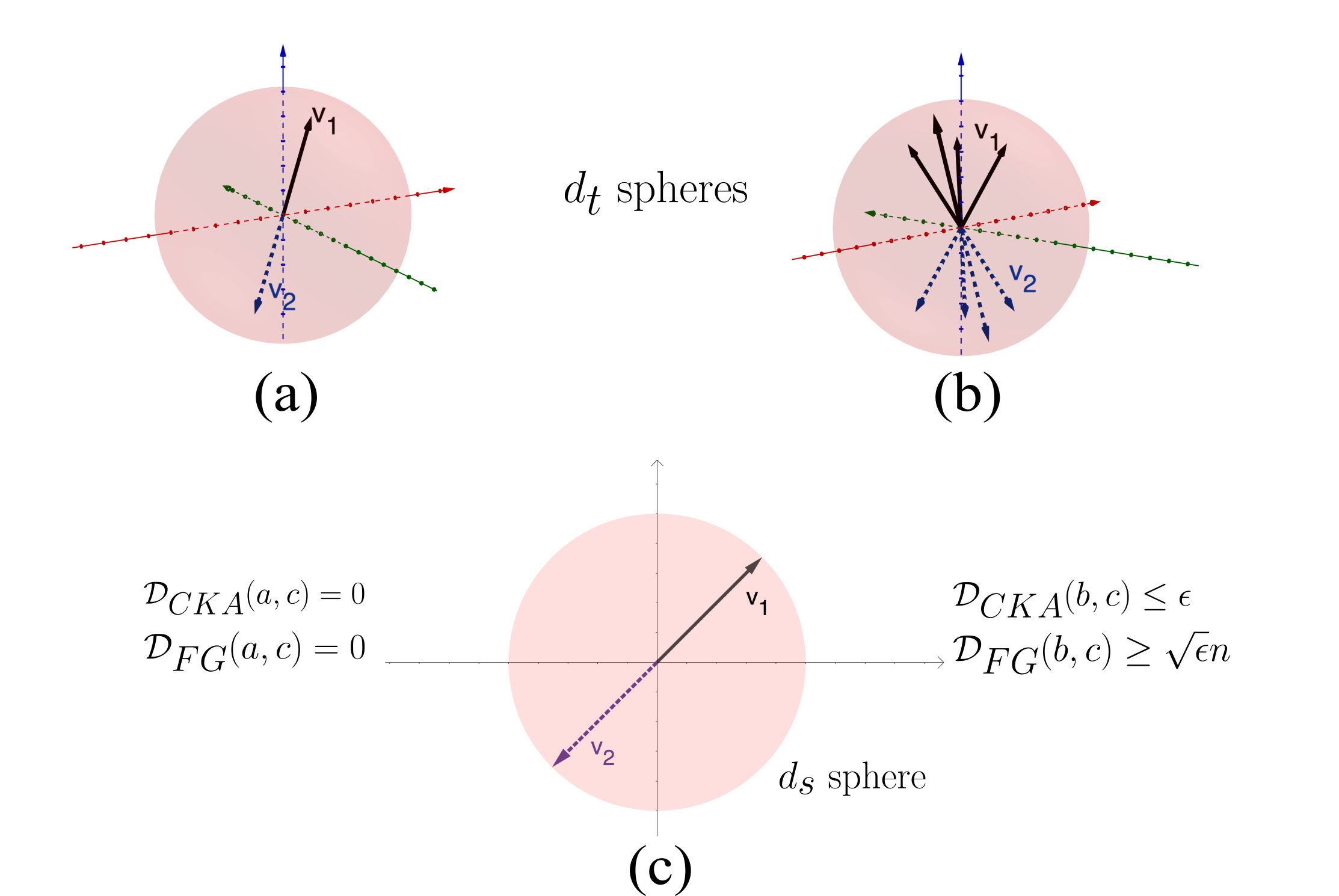}
    \caption{A simplified illustration of the phenomenon prescribed by Theorem 1. (a): $n$ vectors in $d_t$ dimensions lie in exactly two configurations that are antiparallel to each other. (b): A subset of those $n$ vectors from (a) are perturbed along a distinct orthogonal direction among the $d_t$ possible ones. (c): an exact replication of (a) in $d_s < d_t$ dimensions. Although the feature geometries differ, CKA computed with respect to (c) fails to differentiate between (a) and (b).}
    \label{fig:main}
\end{wrapfigure}

With the increasing size and complexity of language models, a significant amount of work has been put into understanding the mechanisms through which these models perform complex tasks. A particularly influential line of work focuses on the geometric properties of these representations. The Linear Representation Hypothesis (LRH) \citep{elhage2022toymodelssuperposition, lrh_park} is motivated by empirical evidence of the linear separability of complex ideas such as gender \citep{bolukbasi2016man}, truthfulness \citep{li2023inference, marks2023geometry},  and refusal \citep{arditi2024refusal, jain2024makes} in the representational space of language models. The LRH hypothesizes that a language model implicitly constructs a subspace within a unit sphere of the dimension size of the representations, with each semantically unrelated concept approximately orthogonal to each other. \citep{jiang2024origins} These empirical insights demonstrate the role of feature geometry in how language models structure and encode knowledge. While large models, as a result of additional dimensionality, are naturally more capable of developing such structured representations during pretraining, smaller models trained in isolation can fail to replicate this structure. By explicitly guiding the student model’s representations to align with the teacher's inner product structure, feature distillation offers a direct mechanism for preserving this geometric structure.

Besides construction through the LRH, the inner product structure between the latent features of language models has also demonstrated unique properties. While the learned feature representation of language models have been shown to be biased by task, complexity and learning order, \citep{lampinen2024learned}, the angles between latent embeddings of models trained under the same data have been observed to be preserved under the same data and modeling choices \citep{moschella2022relative}. This geometric invariance has been exploited for tasks such as latent state communication \citep{maiorca2023latent} and universal translation \citep{jha2025harnessing}, motivating further investigation into its impact on the effectiveness of distillation.

\section{Methods}

\subsection*{Notation}

Consider the case where there are $n$ features, each with a deterministic feature direction. Let $\mathbf{R_t}= [\mathbf{v_1} \dots \mathbf{v_n}]^T  \in \R^{n \times d_t}$ and $\mathbf{R_s} = [\mathbf{w_1} \dots \mathbf{w_n}]^T \in \R^{n \times d_s}$ be the matrices of unit norm representations of the $n$ features from the teacher and student network respectively, such that $\mathbf{v_1} \dots \mathbf{v_n} \in \R^{d_t}$ whereas $\mathbf{w_1} \dots \mathbf{w_n} \in \R^{d_s}$  and $d_t > d_s$.  We will use $\norm{X}_F$ to represent the Frobenius norm of the matrix $X$(square root of the sum of squared singular values)  and $\norm{X}_{*}$ to denote the nuclear norm of $X$ (the sum of singular values). Let $\mathbf{K_t = R_tR_t^T}$ and $\mathbf{K_s = R_s R_s^T}$ be $\R^{n \times n}$ Gram matrix of the teacher and student features respectively. We denote the all-ones matrix in $\R^{d \times d}$ space as $\mathbf{J_d}$. The group of orthonormal transformations for a $d$ dimensional vector is given as $\mathcal{O}(d) = \{\textbf{Q} \in \mathbb{R}^{d \times d} : \mathbf{Q^TQ =QQ^T = I_d} \}$. Furthermore, we denote the set of right orthonormal matrices as $S(m,n) = \{ \mathbf{S} \in \R^{m \times n} : \mathbf{SS^T = I_m} \}  $

\subsection{Feature Gram Matrix Distance}

We define the Feature Gram Matrix Distance (FG) as the Frobenius norm of the difference between the Gram matrices of the teacher and student features

\begin{equation}
    \Dc_{FG} (\mathbf{R_t}, \mathbf{R_s}) = \norm{\mathbf{R_tR_t}^T - \mathbf{R_s R_s}^T}_F = \norm{\mathbf{K_t} - \mathbf{K_s}}_F \label{eq:fgram}
\end{equation}

It is easy to see that $\forall i,j \langle \mathbf{w_i}, \mathbf{w_j} \rangle = \langle \mathbf{v_i, v_j} \rangle$ if and only if $\Dc_{FG} (\mathbf{R_t}, \mathbf{R_s}) =0$.

For our theoretical analysis, we assume that a student model is perfectly geometrically aligned with the teacher model if $\Dc_{FG}=0$. We note perfect alignment, implies that $\mathbf{K_t} = \mathbf{K_s}$, as such their ranks must also be equal. Hence, while $d_t > d_s$, we are implicitly assuming that feature directions live almost exclusively in a low-rank subspace that is at most $d_s$ dimensions. 

\subsection{Learned Projection based distance}

A common way to avoid the pitfall of dimension mismatch between teacher and student models is to learn a linear projection matrix, which has been extensively employed in previous works. \citep{jiao-etal-2020-tinybert,chen2022improved,miles2024vkd}. Formally, the learned projection distance is defined as 

\begin{equation}
    \Dc_{LinProj} (\mathbf{R_t}, \mathbf{R_s}) = \min_{\mathbf{P} \in \R^{d_t \times d_s}} \norm{\mathbf{R_sP} - \mathbf{R_t}}_F \label{eq:proj}
\end{equation}

\subsection{Centered Kernel Alignment (CKA)}

Centered Kernel Alignment was initially proposed in \citet{kornblith2019similarity} to compute a metric for the similarity between neural networks, and has subsequently been employed for distillation in image  \citep{Saha_2022_BMVC, zhou2024rethinking} and language models \citep{JUNG2023120980, dasgupta2025improving}. The construction of CKA allows for the use of any positive definite kernels and includes a rich mathematical construction through Reproducing Kernel Hilbert Spaces. However, in practice, CKA is almost always constructed using a simple linear kernel. Therefore, we will also be using linear CKA in this work. Formally, CKA is defined as 

\begin{equation*}
    {CKA} (\mathbf{R_t, R_s)} = \frac{\tr(\mathbf{K_t K_s})}{\sqrt{\tr(\mathbf{K_t K_t)}} \sqrt{\tr(\mathbf{K_s K_s})}}
\end{equation*}

CKA lies between 0 and 1, with the CKA of 1 implying perfect alignment. For consistency with our other distance based measures, we define a distance based on CKA as:

\begin{equation}
    \Dc_{CKA} (\mathbf{R_t}, \mathbf{R_s}) = 1 - CKA (\mathbf{R_t, R_s}) \label{eq:cka}
\end{equation}

\subsection{Procrustes Distance}
The Procrustes distance \citep{Schonemann_1966} is a key measure in statistical shape analysis \citep{kendall-shape}, where the focus lies on comparing the geometry of point sets in any particular space. Procrustes distance has been recently introduced as a suitable measure for comparing neural networks based on their representations. \citep{Williams2021GeneralizedSM, Duong_Zhou_Nassar_Berman_Olieslagers_Williams_2023}, however it remains unused in a distillation setting. Formally, it is defined as

\begin{equation}
\Pc (\mathbf{R_t, R_s}) = \min_{\mathbf{Q} \in O(d_s)} \norm{\mathbf{R_s Q} - \mathbf{R_t}} \label{eq:proc}
\end{equation}

This definition is not well-defined in the above form when $d_s \neq d_t$. However, a kernel-based reformulation of the Procrustes distance exists that is exactly equivalent to the original formulation when the dimensions are equal, and serves as a natural generalization when the dimensions differ \citep{harvey2024duality}. So, we use this formulation.

\begin{equation}
    \Dc_{\Pc}^2 (\mathbf{R_t}, \mathbf{R_s}) = \tr(\mathbf{K_t}) + \tr(\mathbf{K_s}) - 2\norm{\mathbf{R_s}^T \mathbf{R_T}}_* \label{eq:procrustes}
\end{equation}

The proof for this equivalence is omitted for brevity. We point the interested reader to Theorem 1 of \citet{harvey2024duality} for the full proof of equivalence.

\section{Theoretical results }

\begin{theorem}
    Let $\mathbf{R_t}$ and $\mathbf{R_s}$ be centered, unit norm matrix of feature activations, such that $\Dc_{FG} =0$ and $\Dc_{CKA}=0$. For any $\epsilon \in [0,1]$, we can construct another set of points $\mathbf{\Tilde{R}_t}$ such that $\Dc_{CKA} (\mathbf{\Tilde{R}_t}, \mathbf{R_s}) \leq \epsilon$, but $\mathbf{\Dc}_{FG} (\mathbf{\Tilde{R}_t}, \mathbf{R_s}) = \sqrt{\epsilon} \norm{\mathbf{R_tR_t^T - J_{n}} }_F$
\end{theorem}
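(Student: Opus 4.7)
The key observation is that CKA is invariant to isotropic scaling of either argument. I would therefore take $\mathbf{\tilde{R}_t} = c\,\mathbf{R_t}$ for a positive scalar $c$ to be chosen as a function of $\epsilon$. Under this scaling $\mathbf{\tilde{K}_t} = c^2\mathbf{K_t}$, and because the CKA expression is a ratio homogeneous of degree zero in each argument's Gram matrix, the factor $c^2$ cancels identically. Combined with the hypothesis $\Dc_{CKA}(\mathbf{R_t},\mathbf{R_s})=0$, this already yields $\Dc_{CKA}(\mathbf{\tilde{R}_t},\mathbf{R_s})=0\le\epsilon$ for every $c$, so the CKA half of the theorem is essentially free.

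\textbf{Computing the FG side.} The other hypothesis $\Dc_{FG}(\mathbf{R_t},\mathbf{R_s})=0$ means $\mathbf{K_t}=\mathbf{K_s}$, so
\[
\Dc_{FG}(\mathbf{\tilde{R}_t},\mathbf{R_s}) \;=\; \norm{c^2\mathbf{K_t}-\mathbf{K_t}}_F \;=\; |c^2-1|\,\norm{\mathbf{K_t}}_F.
\]
To match the target $\sqrt{\epsilon}\,\norm{\mathbf{K_t}-\mathbf{J_n}}_F$ I use centering: $\mathbf{1}^T\mathbf{R_t}=\mathbf{0}$ gives $\tr(\mathbf{K_t}\mathbf{J_n}) = \mathbf{1}^T\mathbf{K_t}\mathbf{1} = 0$, so expanding the square yields
\[
\norm{\mathbf{K_t}-\mathbf{J_n}}_F^{\,2} \;=\; \norm{\mathbf{K_t}}_F^{\,2} + n^2,
\]
and the ratio $\norm{\mathbf{K_t}-\mathbf{J_n}}_F/\norm{\mathbf{K_t}}_F$ is a finite positive constant depending only on $\mathbf{R_t}$. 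Choosing $c^2 = 1 + \sqrt{\epsilon}\,\norm{\mathbf{K_t}-\mathbf{J_n}}_F/\norm{\mathbf{K_t}}_F$, which is real and strictly positive for every $\epsilon\in[0,1]$, then delivers $\Dc_{FG}(\mathbf{\tilde{R}_t},\mathbf{R_s})=\sqrt{\epsilon}\,\norm{\mathbf{K_t}-\mathbf{J_n}}_F$ exactly, closing the argument.

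\textbf{Main obstacle.} Once one notices that CKA is blind to isotropic rescaling, the construction essentially writes itself; the only subtle bookkeeping is the centering identity that makes the specific matrix $\mathbf{J_n}$ appear on the right-hand side, rather than some other reference matrix. A more geometric route consistent with Figure~1 would instead take $\mathbf{\tilde{R}_t} = \mathbf{R_t} + \delta\,\mathbf{E}$ with $\mathbf{E}$ chosen so that $\mathbf{E}\mathbf{R_t}^T = \mathbf{R_t}^T\mathbf{E} = \mathbf{0}$, giving $\mathbf{\tilde{K}_t} = \mathbf{K_t} + \delta^2\mathbf{E}\mathbf{E}^T$; there the real work becomes controlling the induced CKA expression $1 - 1/\sqrt{1 + \delta^4\norm{\mathbf{E}\mathbf{E}^T}_F^2/\tr(\mathbf{K_t}^2)}$ so that it stays below $\epsilon$ while the Frobenius norm $\delta^2\norm{\mathbf{E}\mathbf{E}^T}_F$ hits the target, which forces additional structural assumptions on $\mathbf{K_t}$ (the antiparallel configuration of Figure~1 is exactly the borderline case), and explains why the isotropic-scaling shortcut above is the cleanest path to the statement in its full generality.
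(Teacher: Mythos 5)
Your construction $\mathbf{\Tilde{R}_t}=c\,\mathbf{R_t}$ does satisfy a literal reading of the statement, and the arithmetic (including the centering identity $\tr(\mathbf{K_t}\mathbf{J_n})=\mathbf{1}^T\mathbf{K_t}\mathbf{1}=0$ and the choice of $c^2$) is sound. But it misses the intended content of the theorem and departs sharply from the paper's proof. The paper's construction is $\mathbf{\Tilde{K}_t}=(1-\epsilon)\mathbf{K_t}+\epsilon\mathbf{J_n}$: a convex mixture of $\mathbf{K_t}$ with the all-ones matrix, realizable as a Gram matrix of \emph{unit-norm} vectors in $\R^{d_t}$ because $\mathbf{\Tilde{K}_t}$ is psd with rank $\le d_s+1\le d_t$ and has all diagonal entries equal to $1$. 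The paper takes care to verify this unit-norm property, because the notation section stipulates that every $\mathbf{R}$-type matrix in the paper has unit-norm rows; the point of the theorem is that CKA cannot distinguish between two \emph{valid} feature configurations whose Gram matrices differ by a lot. Your $\mathbf{\Tilde{R}_t}=c\,\mathbf{R_t}$ with $c\ne 1$ has rows of norm $c$, so it is not a unit-norm matrix of activations, and the "geometric difference" it produces is just a uniform rescaling --- exploiting the built-in, intentional scale-invariance of CKA rather than exhibiting a genuine failure to detect a change in relative feature geometry. The Remark and Figure~1 make clear that the phenomenon being illustrated is a collapse of the feature directions toward a common direction (the $\mathbf{J_n}$ contribution), which is exactly what the paper's mixture produces, and your rescaling argument does not.

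Your second, "more geometric" sketch ($\mathbf{\Tilde{R}_t}=\mathbf{R_t}+\delta\mathbf{E}$) is closer in spirit but is still a different perturbation than the paper's and, as you note, requires extra structural hypotheses; the paper sidesteps this by working at the level of Gram matrices and invoking only psd-ness and subadditivity of rank. The nontrivial step in the paper's proof --- which your plan has no analogue of --- is the lower bound $\norm{\mathbf{\Tilde{K}_t}}_F\ge(1-\epsilon)\norm{\mathbf{K_t}}_F-\epsilon n$ (obtained via the reverse triangle inequality on $\norm{\mathbf{\Tilde{K}_t}-(1-\epsilon)\mathbf{K_t}}_F=\epsilon n$), which is then fed into the CKA ratio together with $\langle\mathbf{K_t},\mathbf{J_n}\rangle\ge 0$ to establish $\Dc_{CKA}\le\epsilon$. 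To fix your proof you would need to give up isotropic scaling and work with a unit-norm-preserving, centering-compatible modification of the Gram matrix, at which point you essentially arrive at the paper's construction.
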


\begin{proof}
    We provide a proof sketch here and delegate the full proof to the appendix.

   Let $\mathbf{\Tilde{K}_t} = (1-\epsilon) \mathbf{K_t} +\epsilon \mathbf{J_{n}}$.  Note that, as a sum of positive semi-definite matrices, $\mathbf{\Tilde{K}_t}$ is a positive semi-definite matrix, as such there must be a set of points within the unit sphere in of dimension $d_t$ that construct this Gram matrix. We denote these sets of points as $\mathbf{\Tilde{R}_t}$

    By some algebra, we can see that  $\Dc_{CKA} (\mathbf{\Tilde{R}_t}, \mathbf{R_s}) \leq \epsilon$, however $\Dc_{FG} =  \sqrt{\epsilon}  \norm{\mathbf{R_tR_t^T - J_{n}} }_F$
\end{proof}

\begin{remark}
    While $\sqrt{\epsilon}$ might seems like a reasonably close bound, note that $\norm{\mathbf{R_tR_t^T - J_{n}} }_F$ can be in $O(n)$ based on the feature geometry of $\mathbf{R_t}$. In the case of over parameterized models, the $n>>d$ is an implicit assumption, i.e the number of features can eclipse the dimensionality of representations. In particular, as shown in Figure \ref{fig:main} if $\mathbf{R_t}$ consists of two canonical directions that are opposite of each other, CKA can incorrectly imply equivalence in alignment with a higher order feature structure.
\end{remark}

\begin{theorem}
Let $\mathbf{R_t}$ and $\mathbf{R_s}$ be centered, unit norm matrix of feature activations. $\Dc_{LinProj} = 0 \implies \Dc_{FG} =0 $ if and only if the optimal linear projector is in the set of right Orthogonal Matrices, i.e  $\mathbf{P} \in S(d_s, d_t)$
\end{theorem}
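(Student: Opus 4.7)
I would unpack the hypothesis $\Dc_{\mathrm{LinProj}}(\mathbf{R_t}, \mathbf{R_s}) = 0$, which supplies a minimizer $\mathbf{P}^{*} \in \R^{d_s \times d_t}$ realizing $\mathbf{R_s}\mathbf{P}^{*} = \mathbf{R_t}$, and then prove the two directions of the biconditional by direct matrix algebra on the Gram products. Both directions reduce to manipulating $\mathbf{R_s}(\mathbf{P}^{*}\mathbf{P}^{*T} - \mathbf{I_{d_s}})\mathbf{R_s^T}$; the nontrivial work is extracting $\mathbf{P}^{*}\mathbf{P}^{*T} - \mathbf{I_{d_s}} = \mathbf{0}$ from the fact that this conjugated expression vanishes.

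\textbf{Easy direction ($\Leftarrow$).} Suppose the optimal $\mathbf{P}^{*}$ satisfies $\mathbf{P}^{*}\mathbf{P}^{*T} = \mathbf{I_{d_s}}$. Substituting $\mathbf{R_t} = \mathbf{R_s}\mathbf{P}^{*}$ into the teacher Gram matrix gives
\[
\mathbf{R_t R_t^T} = (\mathbf{R_s}\mathbf{P}^{*})(\mathbf{R_s}\mathbf{P}^{*})^T = \mathbf{R_s}(\mathbf{P}^{*}\mathbf{P}^{*T})\mathbf{R_s^T} = \mathbf{R_s R_s^T},
\]
so $\Dc_{FG}(\mathbf{R_t}, \mathbf{R_s}) = 0$.

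\textbf{Harder direction ($\Rightarrow$).} Assume $\Dc_{FG} = 0$, i.e.\ $\mathbf{K_t} = \mathbf{K_s}$. The same substitution yields $\mathbf{R_s}(\mathbf{P}^{*}\mathbf{P}^{*T} - \mathbf{I_{d_s}})\mathbf{R_s^T} = \mathbf{0}$. Writing $\mathbf{M} := \mathbf{P}^{*}\mathbf{P}^{*T} - \mathbf{I_{d_s}}$, which is symmetric, and taking the working assumption that the student feature matrix $\mathbf{R_s}$ has full column rank (so $\mathbf{R_s^T R_s}$ is invertible), I would conjugate the identity above by $(\mathbf{R_s^T R_s})^{-1}\mathbf{R_s^T}$ on the left and its transpose on the right to force $\mathbf{M} = \mathbf{0}$, giving $\mathbf{P}^{*}\mathbf{P}^{*T} = \mathbf{I_{d_s}}$ and therefore $\mathbf{P}^{*} \in S(d_s, d_t)$.

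\textbf{Main obstacle.} The only subtle point is the rank condition on $\mathbf{R_s}$: if its columns are linearly dependent, the minimizer $\mathbf{P}^{*}$ is not unique (there is an affine family $\mathbf{P}^{*} + \mathbf{N}$ with $\mathbf{R_s}\mathbf{N} = \mathbf{0}$), and the identity $\mathbf{R_s}\mathbf{M}\mathbf{R_s^T} = \mathbf{0}$ only forces $\mathbf{M}$ to vanish on the row space of $\mathbf{R_s}$ rather than globally. The cleanest remedy is to state explicitly that $\mathbf{R_s}$ has full column rank (generic for student representations with enough samples, and consistent with the low-rank assumption declared after Equation~\ref{eq:fgram}); alternatively, one can fix a canonical minimizer such as $\mathbf{P}^{*} = \mathbf{R_s^{+}}\mathbf{R_t}$ built from the Moore--Penrose pseudoinverse and interpret ``optimal projector $\in S(\cdot, \cdot)$'' on the effective rank of $\mathbf{R_s}$.
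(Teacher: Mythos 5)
Your proposal is correct and follows essentially the same route as the paper's proof: the $\Leftarrow$ direction substitutes $\mathbf{R_t} = \mathbf{R_s P}$ into the teacher Gram matrix, and the $\Rightarrow$ direction reduces $\Dc_{FG}=0$ to $\mathbf{R_s}(\mathbf{I_{d_s}} - \mathbf{PP^T})\mathbf{R_s^T} = \mathbf{0}$ and then uses full column rank of $\mathbf{R_s}$ to kill the middle factor. Your version is a little cleaner than the paper's at the key step: the paper passes from $\mathbf{R_s M R_s^T} = \mathbf{0}$ to $\mathbf{R_s M} = \mathbf{0}$ with only a brief appeal to ``non-trivial $\mathbf{R_s}$,'' whereas you make the conjugation by the left inverse $(\mathbf{R_s^T R_s})^{-1}\mathbf{R_s^T}$ explicit, which is exactly what makes that step sound and isolates full column rank as the precise hypothesis needed. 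Your closing remark about rank-deficient $\mathbf{R_s}$ (where $\mathbf{M}$ need only vanish on the row space of $\mathbf{R_s}$) also matches the paper's appendix discussion of the spectral relaxation, so nothing is missing.
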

\begin{proof}

First, let $\mathbf{P} \in S(d_s, d_t)$ so that $\mathbf{PP^T = I_{d_s}}$. Now, it is easy to see that $\Dc_{LinProject} =0$  implies that $\mathbf{R_s P} = \mathbf{R_t}$. Now, we can see
\begin{equation*}
\Dc_{FG} = \norm{\mathbf{R_s R_s^T} - \mathbf{R_sP P^T R_s} }_F = \norm{\mathbf{R_s R_s^T}  - \mathbf{R_s R_s^T}}_F = 0 
\end{equation*}

Now, assume that $\mathbf{P} \notin S(d_s, d_t)$. $\Dc_{FG} =0$ only if $\mathbf{R_s PP^T R_s} = \mathbf{R_s R_s^T}$. This implies $\mathbf{R_s} (\mathbf{I_s - PP^T}) \mathbf{R_s^T}=0$. For non-trivial values of $\mathbf{R_s}$ and if $\mathbf{P} \notin S(d_s,d_t)$, this means that $\mathbf{R_s} (\mathbf{I_{d_s} - PP^T})=0$, i.e ($\mathbf{I_{d_s} - PP^T}$) must be entirely contained in the null-space of $\mathbf{R_s}$. When $\mathbf{R_s}$ is full rank, this implies that the $\mathbf{PP^T} = I_{d_s}$ which implies that $\mathbf{P} \in S(d_s,d_t)$.
\end{proof}

\begin{remark}
The above theorem can be relaxed slightly if we can make further assumptions about $\mathbf{R_s}$. In particular, if the row-space of $\mathbf{R_s}$ is contained within the eigenspace of $\mathbf{PP^T}$ with the corresponding value of 1, we can say that even with $\mathbf{P} \notin S(d_s, d_t)$, $\Dc_{LinProj}= 0 \implies \Dc_{FG}=0$. In general, this is a strong assumption to make and any spectral restrictions on the projection matrix is not common practice. So, we have included the proof and analysis for this scenario in the appendix.
\end{remark}

Intuitively Theorem 2 tells us that restricting the possible output space of the learned linear projection to right orthogonal matrices, or increasing the eigenspace of corresponding to the eignevalue of 1 is a necessity in ensuring the optimal correlation between the feature structure and the projection based loss. Note that if $\mathbf{P}$ is restricted to be right orthogonal, the Projection based loss in Equation \ref{eq:proj} bares significant similarity to the Procrustes distance in Equation \ref{eq:proc}.

\begin{theorem}
Let $\mathbf{R_t}$ and $\mathbf{R_s}$ be centered, unit norm matrix of feature activations. $\Dc_{\Pc} = 0 \iff \Dc_{FG} = 0$
\end{theorem}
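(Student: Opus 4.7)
The plan is to prove both directions separately. The forward direction reduces to a direct spectral computation once one observes that the cyclic identity for nonzero eigenvalues ties $\sigma_i^2(\mathbf{R_s^T R_t})$ to the eigenvalues of $\mathbf{K_t K_s}$. The reverse direction proceeds by establishing the Procrustes inequality $\tr(\mathbf{K_t}) + \tr(\mathbf{K_s}) \geq 2\norm{\mathbf{R_s^T R_t}}_*$ via the positive semi-definiteness of the augmented Gram matrix $\mathbf{M} = [\mathbf{R_t} \mid \mathbf{R_s}]^T[\mathbf{R_t} \mid \mathbf{R_s}]$, and then analyzing when the bound is tight.

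For the forward direction, I would assume $\mathbf{K_t} = \mathbf{K_s} =: \mathbf{K}$, compute $\sigma_i^2(\mathbf{R_s^T R_t}) = \lambda_i(\mathbf{R_s^T K_t R_s}) = \lambda_i(\mathbf{K_t K_s}) = \lambda_i(\mathbf{K})^2$ using the cyclic eigenvalue identity for nonzero eigenvalues and PSD-ness of $\mathbf{K}$, and conclude $\norm{\mathbf{R_s^T R_t}}_* = \tr(\mathbf{K})$. Plugging into Equation \ref{eq:procrustes} gives $\Dc_{\Pc}^2 = 0$ immediately.

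For the reverse direction, I would assume WLOG $d_s \leq d_t$ and take the full SVD $\mathbf{R_t^T R_s} = \mathbf{U \Sigma V^T}$. For each pair of singular vectors $(u_i, v_i)$, PSD-ness of $\mathbf{M}$ evaluated on the test vector $(u_i, -v_i)^T$ yields $u_i^T \mathbf{R_t^T R_t} u_i + v_i^T \mathbf{R_s^T R_s} v_i \geq 2\sigma_i$. Summing over $i = 1, \dots, d_s$ and using that $\{v_i\}$ is a complete orthonormal basis of $\R^{d_s}$ while $\{u_i\}_{i=1}^{d_s}$ can be extended to one of $\R^{d_t}$ produces $\tr(\mathbf{K_t}) + \tr(\mathbf{K_s}) \geq 2 \norm{\mathbf{R_s^T R_t}}_*$, recovering $\Dc_{\Pc}^2 \geq 0$.

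The main obstacle is extracting $\mathbf{K_t} = \mathbf{K_s}$ from the equality case. Equality in the chain forces (i) each vector $(u_i, -v_i)^T$ to lie in $\ker(\mathbf{M}) = \ker([\mathbf{R_t} \mid \mathbf{R_s}])$, so $\mathbf{R_t} u_i = \mathbf{R_s} v_i$; and (ii) the omitted basis vectors $u_{d_s+1}, \dots, u_{d_t}$ to lie in $\ker(\mathbf{R_t})$, so that the row-space of $\mathbf{R_t}$ is contained in $\mathrm{span}\{u_1, \dots, u_{d_s}\}$. For any $y \in \R^n$, the vector $\mathbf{R_t^T} y$ lies in the row-space of $\mathbf{R_t}$ and hence expands as $\sum_i (u_i^T \mathbf{R_t^T} y)\, u_i$, giving $\mathbf{K_t} y = \sum_i (\mathbf{R_t} u_i)(\mathbf{R_t} u_i)^T y$. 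The symmetric expansion of $\mathbf{K_s} y$ (now using the complete basis $\{v_i\}$), combined with $\mathbf{R_t} u_i = \mathbf{R_s} v_i$, forces $\mathbf{K_t} y = \mathbf{K_s} y$ for every $y$, i.e., $\mathbf{K_t} = \mathbf{K_s}$. A conceptual shortcut is to recognize $\Dc_{\Pc}^2$ as the squared Bures distance on PSD matrices and invoke its metric property, but the elementary argument above is self-contained.
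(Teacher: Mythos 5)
Your proof is correct, but it takes a genuinely different route from the paper's in both directions, and is arguably more self-contained. For $\Dc_{FG}=0 \Rightarrow \Dc_{\Pc}=0$, the paper writes out explicit SVDs $\mathbf{R_s} = \mathbf{U_s \Sigma_s V_s^T}$, $\mathbf{R_t} = \mathbf{U_t \Sigma_t V_t^T}$ and matches the left singular factors and singular values to evaluate $\norm{\mathbf{R_s^T R_t}}_*$; you instead invoke the cyclic identity for nonzero eigenvalues to pass directly from $\sigma_i^2(\mathbf{R_s^T R_t})$ to $\lambda_i(\mathbf{K_t K_s}) = \lambda_i(\mathbf{K})^2$, never touching the individual factor decompositions — cleaner, and it makes transparent that everything happens at the level of the Gram matrices. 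For $\Dc_{\Pc}=0 \Rightarrow \Dc_{FG}=0$, the paper rewrites $\Dc_{\Pc}$ as $\norm{\mathbf{R_s U} - \mathbf{R_t V}}_F$ using the variational form of the nuclear norm and then reasons about the row space of $\mathbf{R_t}$ sitting inside the column space of $\mathbf{V}$; your route is to derive the inequality $\tr(\mathbf{K_t}) + \tr(\mathbf{K_s}) \geq 2\norm{\mathbf{R_s^T R_t}}_*$ from PSD-ness of the block Gram matrix $\mathbf{M}$ and then analyze the equality case. Your version buys something the paper's does not emphasize: it proves $\Dc_{\Pc}^2 \geq 0$ as a byproduct, which is not obvious from the kernel-based definition in Equation~\ref{eq:procrustes} when $d_s \neq d_t$, and the equality analysis (each test vector $(u_i, -v_i)$ in $\ker(\mathbf{M})$, hence $\mathbf{R_t} u_i = \mathbf{R_s} v_i$, plus the unused $u_j$'s lying in $\ker(\mathbf{R_t})$) gives a pleasingly concrete description of why the Gram matrices coincide. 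Both proofs are sound; yours trades the paper's nuclear-norm variational lemma (Horn--Johnson 3.4.1) for a direct PSD argument and trades explicit SVD bookkeeping for trace cyclicity.
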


\begin{proof}
    We sketch the proof and defer details to the appendix.
 For the forward direction, we use the definition of nuclear norm to decompose $\norm{\mathbf{R_s^TR_t}}_*$ as $\tr(\mathbf{U^T R_s R_tV}) $ where $\mathbf{U}$ and $\mathbf{V}$ come from the SVD of $\mathbf{R_s^TR_t}$. So, simplifying the definition of $\Dc_{\Pc}$ in \ref{eq:procrustes}, we get $\Dc_{\Pc} = \norm{\mathbf{R_s U - R_tV}}_F$.

    We now need to prove that if $\mathbf{R_s U} = \mathbf{R_t V}$, then $\mathbf{R_tR_t^T = \mathbf{R_sR_s^T}}$, we use the properties of $\mathbf{U}$ and $\mathbf{V}$ as orthonormal matrices and argue that that the row space of $\mathbf{R_t}$ must be in the column space of $\mathbf{V}$, leading to the desired equality of Gram matrices.

    In the reverse direction, we argue that if $\mathbf{R_sR_s^T = R_tR_t^T}$, then the SVDs must match in singular values. So, $\norm{\mathbf{R_s^T R_t}}_* = \norm{\mathbf{R_sR_s^T}}_F = \norm{\mathbf{R_tR_t^T}}$.
\end{proof}

\section{Experiments}
First, we empirically validate our theoretical claims that Procrustes is the better measure to optimize over in order to preserve feature structure. In particular, we consider the geometry where the teacher has features that are approximately orthogonal to each other.

To demonstrate the feasibility of our model in a realistic setting, we evaluate the effectiveness of geometry-aware feature distillation across model architectures, tasks, and training settings. We evaluate our method on both encoder-only (BERT) and decoder-only (OPT) architectures. These models are widely used in both research and production contexts, and have served as benchmarks for prior distillation efforts. \citep{sanh2020distilbert, mukherjee-hassan-awadallah-2020-xtremedistil,sun2019patient, gu2024minillm, dasgupta2025improving}

\subsection{Synthetic Experiment}
\textbf{Data setup:}
To better mimic the dimensional mismatch in real models, we set the teacher dimension to be $d_t =1000$ and $d_s = 500$. We randomly sample $n$ unit norm vectors, that are $\epsilon$-orthogonal to each other, i.e $\mathbf{v_1} \dots \mathbf{v_n}$ such that $\abs{\langle \mathbf{v_i} , \mathbf{v_j} \rangle} \leq \epsilon$ for all $i \neq j$. It is easy to construct these $n= 2^{\frac{\epsilon^2 d_t}{4}}$ such vectors by simply randomly sampling each coordinate in $\mathbf{v_i} \in \R^{1000}$ to be $+1$ or $-1$ with probability $1/2$, and normalizing them to unit length. We set $\epsilon= 0.2$, and thus get $n=22,026$ vectors that are all $\epsilon-$ orthogonal to each other. These vectors become our teacher representations. Mathematical details on why this construction works is included in the appendix.

We project the teacher representations down to $d_s=500$ dimension using a random projection matrix. We further evaluate a setting in which student representations are randomly generated, ensuring no correspondence with the teacher’s features. We observe consistent results across both experimental setups, and include details in the appendix.

\textbf{Training process:}
We perform gradient based optimization over the Gram matrix distance (Eq \ref{eq:fgram}), Projection based distance (Eq \ref{eq:proj}), CKA (Eq \ref{eq:cka}) and Procrustes distance (Eq \ref{eq:procrustes}) where the gradients are computed only on the student representations. To make this minimization more realistic to the distillation setting, we perform the optimization over batches. We use a batch size of 256 and use ADAM \citep{kingma2015adam} with a learning rate of 0.01. Our training is performed for 7 epochs over five randomized seeds. 

\textbf{Evaluation metrics:}
We evaluate our optimization based on the number of $\epsilon$- orthogonal vectors in the student representations during a point in the optimization process. We formulate the problem of computing the maximum number of approximately orthogonal vector as a special case of the maximal independent set problem in graph theory. In particular, we consider the Gram matrix from student representations and consider an edge between two vectors if their inner product is more than $\epsilon$. While the maximal independent set problem is NP hard, we use Luby's algortihm \citep{luby1985simple}, a classical randomized algorithm to compute an estimate for the size of the $\epsilon$-orthogonal vectors.

\textbf{Results:}
As shown in Figure \ref{fig:synth-res}, we find that Procrustes and the norm of the Gram Matrix leads to the highest number of approximately orthogonal vectors, implying that optimizing over them leads to the best replication of the teacher feature structure. We observe that the Procrustes method exhibits greater structural stability throughout the minimization process, whereas the Gram matrix demonstrates more pronounced fluctuations. We attribute the Gram-matrix fluctuations to batch noise, which can cause over-correction when inner products exceed $\epsilon$.

\begin{figure}
\centering
\begin{subfigure}[t]{0.45\linewidth}
    \centering
    \includegraphics[width=\linewidth]{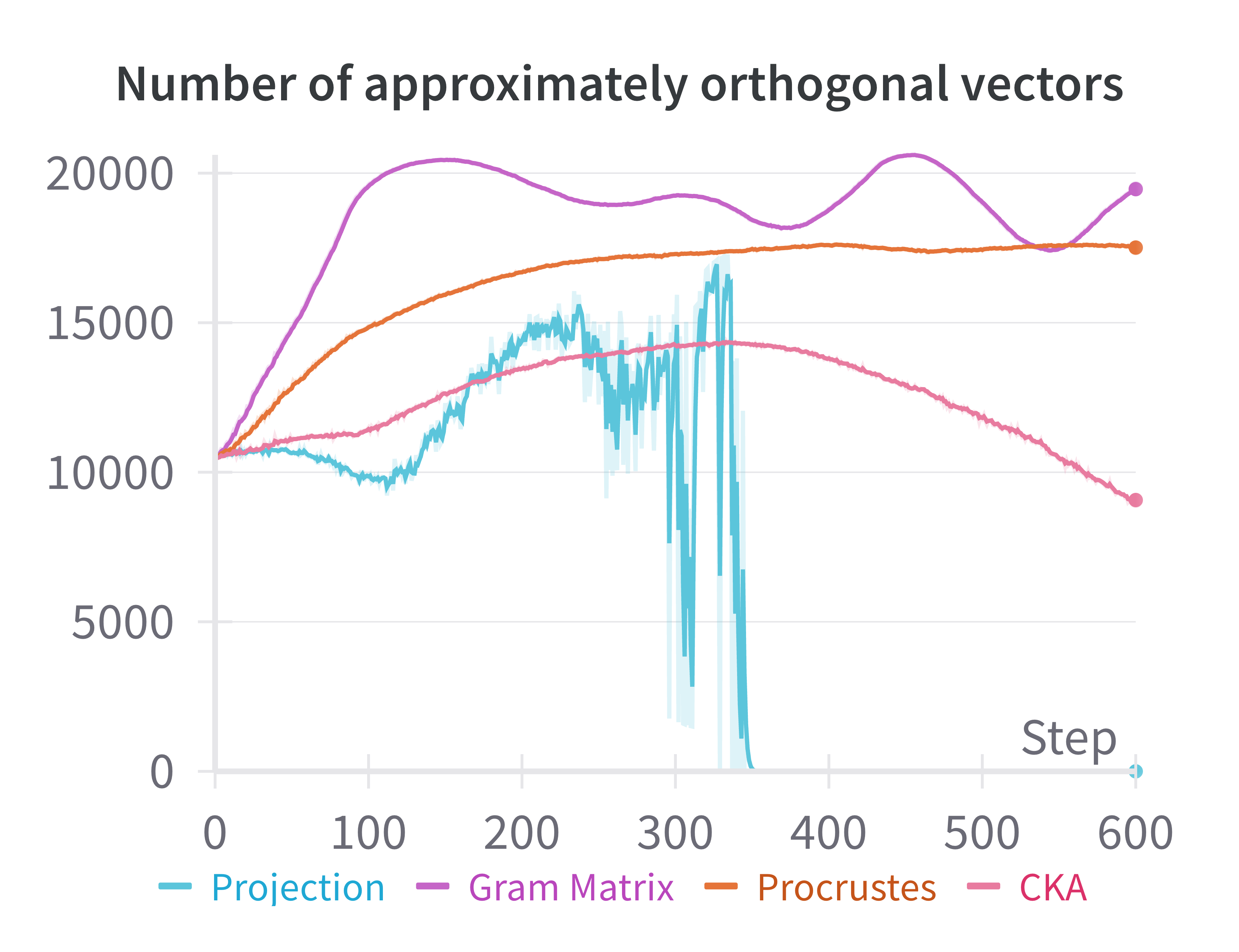}
    \caption{Results of our synthetic experiment. The legend denotes the optimization metric used in each experiment. Optimizing over Procrustes or the Feature Gram matrix leads to the highest number of approximately orthogonal vectors, and thus the better replication of the teacher's geometry. }
    \label{fig:synth-res}
\end{subfigure}
\hfill
\begin{subfigure}[t]{0.45\linewidth}
    \centering
    \includegraphics[width=\linewidth]{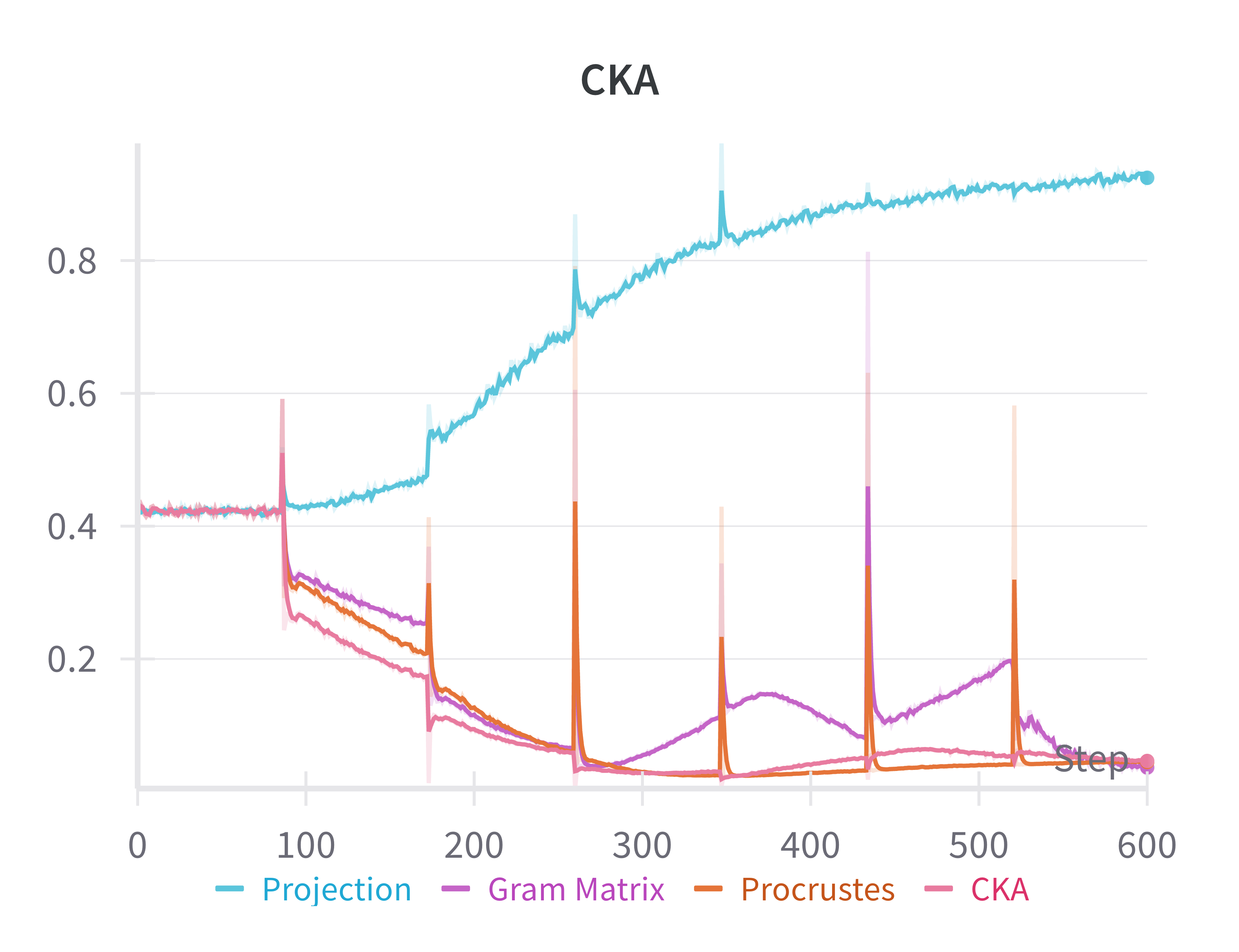}
    \caption{The value of CKA while optimizing over the metrics. CKA converges close to 0 when optimized over all metrics, with the exception of linear projection.}
    \label{fig:cka}
\end{subfigure}

\end{figure}

The inadequacies of CKA are quite apparent by this experiment. The number of orthogonal vectors goes down quite significantly even though the value of CKA is close to zero as seen in  Figure \ref{fig:cka}. Our finds corroborate claims that optimizing over Procrustes is in some sense ``stronger" than optimizing over CKA \citep{cloos2024differentiable, harvey2024what}. Learned projection based distances demonstrates subpar performance; it is extremely noise and underperforms even after optimization.

\subsection{Encoder-only model for classification}

\textbf{Dataset \& Tasks:}
We experiment on the GLUE benchmark \citep{wang2018glue}. Specifically, we use three tasks within GLUE: CoLA \citep{warstadt2019neural}, MRPC \citep{dolan2005automatically} and RTE \citep{rte5}. CoLA involves predicting whether a sequence of words is a grammatical English sentence, and is evaluated using Matthews correlation coefficient (MCC) \citep{matthews1975comparison}. MRPC contains two sentences and the task involves predicting if they are semantically equivalent. Since the dataset is imbalanced, we report both accuracy and F1 score.  RTE involves an entailment challenge; given a premise sentence and a hypothesis sentence, the task is to predict whether the premise entails the hypothesis. We evaluate RTE using classification accuracy.
These tasks were chosen from the 9 GLUE benchmark tasks  because they had the greatest discrepancy in performance between teacher and student model after five epochs of fine-tuning.

\textbf{Loss function:}
Our loss function takes the form of 

\begin{gather}
    \mathcal{L} = \gamma \mathcal{L}_{CE} \left( f_S, \hat{y} \right) + \alpha \mathcal{L}_{\text{sim}} \left( \phi_T(f_T), \phi_S(f_S) \right)
    + (1- \alpha) \mathcal{L}_{\text{KD}} \left( f_S, f_T \right)
    \label{eq:1}
\end{gather}

$\mathcal{L}_{CE}$ represents the cross entropy loss of the student logits with respect to output labels, $\mathcal{L}_{\text{sim}}$ is either $\Dc_{CKA}$ or $\Dc_{\Pc}$ and $\mathcal{L}_{KD}$ is the KL divergence between student and teacher logits.

$\gamma \in \{0,1 \}$ indicates whether we are including supervised cross entropy loss, and $\alpha \in [0,1]$ controls the interplay between hidden layer and last layer similarities. $f_S$ and $f_T$ are outputs, including hidden representations, of student and teacher models. $\phi$ is a function that extracts hidden layers from the model. For ease of notation, if $\phi_T = (a, b)$, it is extracting hidden representations from the $a^\text{th}$ and $b^\text{th}$ layers of the model.

\textbf{Model details:}
We perform all our distillation tasks on the BERT model. \citep{devlin2019bert}. As in common in most distillation studies, we use pre-trained BERT-large model, which has 24 encoder layers, as the teacher model and pre-trained BERT-base model with half the layers removed as as the student model. We fine-tune the pre-trained BERT-large model for 5 epochs on each task, and use this fine-tuned model as the teacher for distillation. The student is not fine-tuned on any tasks.

\textbf{Training details:}
We use a context size of 128, which aligns with most samples from the datasets. We optimize using ADAM \citep{kingma2015adam} with a learning rate of $2 \times 10^{-5}$ and a batch size per GPU of 64, with 2 NVIDIA A100 80 GB GPU. We use Hugging Face libraries \citep{wolf2020huggingfacestransformersstateoftheartnatural} to perform all our training and evaluation. We run an initial hyperparameter sweep over [0, 0.2, 0.4, 0.6, 0.8, 1] for the best value of $\alpha$ in Equation \ref{eq:1}. Evaluations are reported after running distillation across the three tasks for 6 epochs. Furthermore, to ensure statistical significance in the performance of our distilled model, we use McNemar's test \citep{mcnemar1947note,dietterich1998approximate} to compare all distilled models against the fine-tuned baseline. Unless otherwise noted, all results reported are statistically significant ($p<0.05$)

\textbf{Multi Layer Distillation Results:}
First, we present the results when distilling with procrustes using all layers in network. Results are presented in Table \ref{tab:rte}. To ensure appropriate layers are matched, we match layer $n$ of the student model with layer $2n$ of the teacher model, as is common in previous works. We benchmark our results alongside Progressive KD (PKD) \citep{sun2019patient}, DistillBERT \citep{sanh2020distilbert}, MiniLMv2 \citep{wang2020minilmv2}, LinBERT and CKABERT. \citep{dasgupta2025improving}. Procrustes does better than all methods in CoLA while performing on par with MiniLLMv2 in MRPC, while MiniLLMv2 does better in RTE. Note that MiniLLMv2 involves aligning attention scores across all heads, and therefore is not perfectly comparable with other feature distillation methods.

\begin{wraptable}{l}{0.6\linewidth}
\centering
\begin{tabular}{|l|c|c|c|}
\hline
\textbf{Method}  &                        \textbf{COLA}  & \textbf{RTE}  & \textbf{MRPC} \\
\hline
PKD  (\citeauthor{sun2019patient})                & N/A            & 65.9          & 86.2     \\
\hline
DistillBERT  (\citeauthor{sanh2020distilbert})          & 51.3           & 59.9          & 87.5          \\
\hline
MinilLMv2 (\citeauthor{wang2020minilmv2})                 & 48.6           & \textbf{69.2} & \textbf{88.9}          \\
\hline
LinBERT  (Dasgupta et al.)         & 46.5           & 61.0          & 87.0        \\
\hline
CKABert (Dasputa et al.) & 50.2 & 63.0 & 87.8 \\
\hline
Procrustes & \textbf{56.0} & 68.2          & \textbf{88.9} \\
\hline
\end{tabular}
    \caption{Comparison of Procrustes distance as the distillation objective compared to other proposed feature distillation methods for BERT. Distillation is done over all layers.}
    \label{tab:rte}
\end{wraptable}
\textbf{Procrustes does better, even with single layers:}
For all tasks in this section, we assume $\phi_T =(12)$ and $\phi_S = (6)$, i.e we are aligning the middle layer of the teacher model with the middle layer of the student model. All results are noted after minimizing the loss function from Equation \ref{eq:1}.

As shown in Table \ref{tab:bert_1}, including either Procrustes or CKA as $\mathcal{L}_\text{sim}$ alongside $\mathcal{L}_{KD}$ and $\mathcal{L}_{CE}$ increases the performance of the student model across all three tasks. Procrustes distance does better, with statistical significance, across all three tasks.

\textbf{Feature distillation, by itself, is disastrous: }
When we remove KL divergence and fine-tuning loss entirely we see that the performance is significantly worse across all tasks and similarity functions. While leveraging the geometry of representations can steer the student model towards producing the correct output, it cannot by itself bias the model to produce the correct output. Some output information, either through teacher logits or supervised labels, are essential to ensure the model performs well on a particular task.

\begin{table}[b]
\centering
\begin{tabular}{|l|c|c|c|}
\hline
\textbf{Method}      & \textbf{CoLA}   & \textbf{RTE}    & \textbf{MRPC}        \\ \hline
RD baseline          & 0.00               & 0.50               & 68.0/80.9           \\ \hline
FT baseline          & 51.02          & 61.73          & 81.6/87.7          \\ \hline  \hline
FT + KD              & 51.52          & 63.89 †        & 81.3/86.6 †        \\ \hline \hline
CKA only             & 10.66          & 47.29          & 68.3/81.2          \\ \hline
CKA + KD             & 52.03          & 64.62 †        & 81.1/87.3          \\ \hline
CKA + KD + FT        & 52.04 & 64.62          & 81.3/84.6          \\ \hline  \hline
Procrustes only      & 11.94          & 56.31          & 68.3/81.2          \\ \hline
Procrustes + KD      & 51.03          & 63.37          & 79.1/85.9          \\ \hline
Procrustes + KD + FT & \textbf{54.97  }        & \textbf{65.70} & \textbf{83.5/88.7} \\ \hline
\end{tabular}
    \caption{Performance on MRPC, CoLA and RTE while distilling on a single layer of the GLUE dataset. \textbf{RD}: Random baseline,  \textbf{FT}: Fine-tuning on labels, \textbf{KD}: Distillation on KL divergence of the last layer logits. $\dagger$ indicates results that are not statistically significant ($p \geq 0.05$).)}
    \label{tab:bert_1}

\end{table}
\subsection{Instruction following in LLMs}
\begin{table}
\centering
\begin{tabular}{|l|c|c|c|}
\hline
\textbf{Method}  & \textbf{SelfInst} & \textbf{U-NI} & \textbf{S-NI} \\ \hline
Seq-KD \citep{kim2016sequence}  & 10.81 $\pm$ 0.001 &  15.05 $\pm$  0.001 & 7.33 $\pm$ 0.0003  \\ \hline
MiniLLM \citep{gu2024minillm}  & 10.83 $\pm$ 0.002 & 15.50 $\pm$ 0.001& 7.34 $\pm$ 0.0002  \\ \hline
CKA ($\mathbf{\Dc_{CKA}}$)   & {11.00 $\pm$ 0.0002} & {17.57 $\pm$ 0.001} &  8.30 $\pm$ 0.0001 \\ \hline
Gram matrix ($\Dc_{FG}$)   & 11.07 $\pm$ 0.0003 & \textbf{17.60 $\pm$ 0.001} & 8.31 $\pm$ 0.0001   \\ \hline
Procrustes ($\Dc_{\Pc}$)  & \textbf{11.11 $\pm$ 0.0003} & {17.59 $\pm$ 0.001} & \textbf{8.33 $\pm$  0.0001 }\\ \hline
\end{tabular}
\caption{Rogue-L scores on instruction-following after distillation for 7000 steps on the Dolly dataset. Evaluations are reported with means and standard deviation with 5 random seeds. Seq-KD are MiniLLM are reported on models distilled using KL divergence by \citet{gu2024minillm} }
\label{tab:inst-following}
\end{table}

\textbf{Dataset \& Task:}
We experiment on the instruction following-task \citep{ouyang2022training} and follow the same experimental setup as \citet{gu2024minillm}. In particular, the model is prompted an instruction with a corresponding input, and is evaluated based on the correctness of the response. 

Our teacher models are fine-tuned on the Databricks Dolly dataset \citep{DatabricksBlog2023DollyV2}, which consists of 15k instruction-response pairs. We measure the quality of the response using Rouge-L \citep{lin2004rouge}, which has been shown to be a good proxy for human-preference judgment in instruction-following tasks \citep{wang-etal-2022-super}. 

We report our performance on three other instruction following datasets. These include: Self-Inst \citep{wang2022self}, which consists of 252 instruction-following prompts,S-NI \citep{wang-etal-2022-super}, the test of SuperNaturalInstructions, which includes 9K samples across 119 tasks, and 10k randomly sampled instructions from U-NI, the UnaturalInstructions Dataset. \citep{honovich2022unnatural}.

Our outputs are generated through multinational sampling with a temperature of 1 over five randomized seeds. We report the mean and standard deviation of the Rogue-L scores.

\textbf{Model details:}
To be consistent with the baselines from \citet{gu2024minillm}, we perform our experiments on the OPT model family \citep{zhang2022opt}. The teacher model is a fine-tuned 40-layered OPT-13B on Dolly that we use from the MiniLLM Huggingface page \footnote{https://huggingface.co/MiniLLM/models}, while the student model is a 24 layered OPT 1.3B.

\textbf{Training details:}
Our loss function is the same as Equation \ref{eq:1}, but with $\alpha=1$, i.e we only optimize over the language modeling and feature distillation losses. We only align the last layers, primarily due to the increased computational complexity of aligning more layers. We use a context size of 1024 and optimize using a batch size of 4. We use 3 NVIDIA A100 80GB GPUs for training.  We optimize using Adafactor \citep{shazeer2018adafactor}, since it is more memory efficient and tends to have similar performance as the other optimizers in a low-batch setting \citep{marek2025small}. We report evaluations after performing distillation for 7000 steps. 

\textbf{Geometry preserving methods generally do better:}
As seen in Table \ref{tab:inst-following}, we find that one Procrustes generally tends to get the best instruction-following performance in Self-Instruct and S-NI, while trailing close behind Gram Matrix difference in U-NI. In general, we find that both Procrustes and the Gram Matrix difference makes marginal, but statistically significant improvements compared to CKA, and make up to a $2\%$ improvement over previously proposed logit based distillation methods.

\section{Conclusion}

In this work, we take a critical look at prevailing distillation measures used in feature distillation, namely learned linear projection based distance and Centered Kernel Alignment. We present theoretical and empirical evidence demonstrating that these measures do not reliably preserve the feature geometry of teacher models. We introduce the Procrustes distance as a geometrically grounded loss function for feature distillation and show that it performs well in classification and instruction-following tasks.

\bibliography{iclr2026_conference}
\bibliographystyle{iclr2026_conference}

\newpage
\appendix
\section{Theoretical Proofs}

\subsection{Proof of Theorem 1}

We start by showing some elementary properties of positive semi-definite matrices that we will use in our proof
\begin{lemma}
Let $\mathbf{A}$ and $\mathbf{B}$ be $n \times n$ positive semi-definite matrices. For any $\alpha >0$ and $\beta > 0$, $\alpha \mathbf{A} + \beta \mathbf{B}$  is also positive semi-definite

\end{lemma}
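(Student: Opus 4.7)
The plan is to prove the lemma directly from the standard definition of positive semi-definiteness, namely that a symmetric matrix $\mathbf{M} \in \mathbb{R}^{n \times n}$ is positive semi-definite if and only if $\mathbf{x}^T \mathbf{M} \mathbf{x} \geq 0$ for every $\mathbf{x} \in \mathbb{R}^n$. There are two things to verify about $\alpha \mathbf{A} + \beta \mathbf{B}$: symmetry, and nonnegativity of the associated quadratic form. For the first, I would note that $(\alpha \mathbf{A} + \beta \mathbf{B})^T = \alpha \mathbf{A}^T + \beta \mathbf{B}^T = \alpha \mathbf{A} + \beta \mathbf{B}$, using symmetry of $\mathbf{A}$ and $\mathbf{B}$ and the linearity of the transpose.

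For the second, I would fix an arbitrary $\mathbf{x} \in \mathbb{R}^n$ and expand
\begin{equation*}
\mathbf{x}^T (\alpha \mathbf{A} + \beta \mathbf{B}) \mathbf{x} = \alpha \, \mathbf{x}^T \mathbf{A} \mathbf{x} + \beta \, \mathbf{x}^T \mathbf{B} \mathbf{x}.
\end{equation*}
Each of the terms $\mathbf{x}^T \mathbf{A} \mathbf{x}$ and $\mathbf{x}^T \mathbf{B} \mathbf{x}$ is nonnegative by the PSD hypothesis on $\mathbf{A}$ and $\mathbf{B}$, and multiplying by the positive scalars $\alpha$ and $\beta$ preserves nonnegativity. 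Therefore the sum is nonnegative, and since $\mathbf{x}$ was arbitrary, $\alpha \mathbf{A} + \beta \mathbf{B}$ is PSD.

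There is no real obstacle here; the statement is a direct consequence of the linearity of the quadratic form in the matrix argument, and the closure of the nonnegative reals under addition and multiplication by positive scalars. The only thing worth flagging is that the hypothesis $\alpha, \beta > 0$ is stronger than needed (nonnegativity suffices), but stating the lemma in the stronger form is harmless for its later use in the proof of Theorem 1, where it is applied with $\alpha = 1-\epsilon$ and $\beta = \epsilon$ for $\epsilon \in [0,1]$ to conclude that $\widetilde{\mathbf{K}}_t = (1-\epsilon)\mathbf{K}_t + \epsilon \mathbf{J}_n$ is PSD and hence admits a Gram factorization.
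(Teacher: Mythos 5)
Your proof is correct and takes essentially the same route as the paper: expand the quadratic form $\mathbf{x}^T(\alpha\mathbf{A} + \beta\mathbf{B})\mathbf{x}$ and use nonnegativity of each term. The only difference is that you also verify symmetry explicitly (which the paper leaves implicit) and observe that $\alpha,\beta \geq 0$ would suffice; both are harmless refinements rather than a different argument.
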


\begin{proof}
    Since $\mathbf{A}$ and $\mathbf{B}$ are positive semi-definite, we have for all $\mathbf{x} \in \R^n$,  $\mathbf{x^T Ax} \geq 0$ and $\mathbf{x^TBx \geq 0}$

    Let $\mathbf{C}= \alpha \mathbf{A} + \beta \mathbf{B}$. Now, for any $\mathbf{x} \in \R^n$, we have $\mathbf{x^TCx} = \alpha \mathbf{x^T A x} + \beta \mathbf{x^T B x} \geq 0$ since $\alpha, \beta >0$ and $\mathbf{A}$ and $\mathbf{B}$ are p.s.d.

\end{proof}

\begin{lemma}
    Let $\mathbf{A}$ and $\mathbf{B}$ be positive semi-definite matrices. $\langle \mathbf{A}, \mathbf{B}  \rangle = \tr(\mathbf{A}^T\mathbf{B}) \geq0$
\end{lemma}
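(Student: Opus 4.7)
The plan is to reduce the bilinear trace to a sum of non-negative quadratic forms by exploiting the spectral structure of one of the two positive semi-definite matrices. First I would note that positive semi-definiteness (in the usual convention adopted here and in Lemma 3) includes symmetry, so $\mathbf{A}^T = \mathbf{A}$ and therefore $\tr(\mathbf{A}^T\mathbf{B}) = \tr(\mathbf{A}\mathbf{B})$; this removes the transpose from the statement and lets me focus on showing $\tr(\mathbf{A}\mathbf{B}) \geq 0$.

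Next I would apply the spectral theorem to $\mathbf{A}$, writing $\mathbf{A} = \sum_{i=1}^n \lambda_i \mathbf{u}_i \mathbf{u}_i^T$ where $\lambda_i \geq 0$ are the eigenvalues of $\mathbf{A}$ (non-negative since $\mathbf{A}$ is p.s.d.) and $\{\mathbf{u}_i\}$ is an orthonormal eigenbasis. Substituting into the trace and using linearity plus the cyclic property gives
\begin{equation*}
    \tr(\mathbf{A}\mathbf{B}) = \sum_{i=1}^n \lambda_i \tr(\mathbf{u}_i \mathbf{u}_i^T \mathbf{B}) = \sum_{i=1}^n \lambda_i \, \mathbf{u}_i^T \mathbf{B} \mathbf{u}_i.
\end{equation*}
Each factor $\mathbf{u}_i^T \mathbf{B} \mathbf{u}_i$ is non-negative because $\mathbf{B}$ is p.s.d., and each $\lambda_i$ is non-negative because $\mathbf{A}$ is p.s.d., so the entire sum is non-negative, which is the desired conclusion.

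An equally clean alternative I would keep in reserve is the square-root route: since $\mathbf{A}$ is p.s.d., there exists a symmetric p.s.d. square root $\mathbf{A}^{1/2}$, and by cyclicity $\tr(\mathbf{A}\mathbf{B}) = \tr(\mathbf{A}^{1/2} \mathbf{B} \mathbf{A}^{1/2})$. The matrix $\mathbf{A}^{1/2}\mathbf{B}\mathbf{A}^{1/2}$ is p.s.d. (for any $\mathbf{x}$, $\mathbf{x}^T \mathbf{A}^{1/2} \mathbf{B} \mathbf{A}^{1/2} \mathbf{x} = (\mathbf{A}^{1/2}\mathbf{x})^T \mathbf{B} (\mathbf{A}^{1/2}\mathbf{x}) \geq 0$), and the trace of any p.s.d. matrix equals the sum of its non-negative eigenvalues and is therefore non-negative.

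There is no real obstacle here; the only subtlety worth flagging is the implicit symmetry assumption built into the p.s.d.\ convention, which is why the transpose in the statement is harmless. I would choose the spectral-decomposition write-up as the main proof because it is fully self-contained and avoids invoking existence of a square root, keeping the argument at the same elementary level as the preceding Lemma 3.
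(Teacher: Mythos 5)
Your main argument (spectral decomposition of $\mathbf{A}$) is correct but takes a genuinely different route from the paper. The paper instead uses the symmetric square root: it writes $\tr(\mathbf{A}\mathbf{B}) = \tr(\mathbf{A}^{1/2}\mathbf{A}^{1/2}\mathbf{B}) = \tr(\mathbf{A}^{1/2}\mathbf{B}\mathbf{A}^{1/2})$ by cyclicity, then expands the last trace as $\sum_i \mathbf{e}_i^T \mathbf{A}^{1/2}\mathbf{B}\mathbf{A}^{1/2}\mathbf{e}_i = \sum_i (\mathbf{A}^{1/2}\mathbf{e}_i)^T \mathbf{B} (\mathbf{A}^{1/2}\mathbf{e}_i) \geq 0$, which is exactly the reserve alternative you sketch at the end (the paper just spells out the non-negativity of the trace via the standard basis rather than citing it as a fact). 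Your primary spectral-decomposition version reduces $\tr(\mathbf{A}\mathbf{B})$ to $\sum_i \lambda_i\, \mathbf{u}_i^T\mathbf{B}\mathbf{u}_i$ and uses $\lambda_i \geq 0$ together with p.s.d.-ness of $\mathbf{B}$; this is equally valid and, as you note, has the small advantage of not needing existence of a matrix square root, whereas the square-root route avoids diagonalizing $\mathbf{A}$. Both are elementary and both suffice for the lemma's role in the Theorem~1 proof.
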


\begin{proof}
Since $\mathbf{A}$ is p.s.d we know that $\mathbf{A^T =A}$. So, now when $\mathbf{e_i}$ is the i-th basis vector,

\begin{align*}
    \langle \mathbf{A, B} \rangle &= \tr(\mathbf{A^T B}) \\
    &= \tr(\mathbf{AB}) \\
    &= \tr(\mathbf{A^{1/2} A^{1/2} B}) \\
    &= \tr(\mathbf{A^{1/2} B A^{1/2}}) \\
    &= \sum_{i=1}^n \mathbf { {e_i}^T (A^{1/2} B A^{1/2}) e_i} \\
    &= \sum_{i=1}^n \mathbf{(A^{1/2} e_i )^T B (A^{1/2}e_i)} \\
    \end{align*}
We use the cyclic property of the trace operator in the fourth step and the fact that $\mathbf{A^{1/2}}$ is also a symmetrical matrix in the fifth step. Now, since $\mathbf{B}$ is p.s.d and $\mathbf{A^{1/2} e_i} \in \R^n$, each of terms in the summation is non-negative. Hence the sum is also non-negative.
\end{proof}

We restate Theorem 1 below:

\begin{theorem1}
    Let $\mathbf{R_t}$ and $\mathbf{R_s}$ be centered, unit norm matrix of feature activations, such that $\Dc_{FG} =0$ and $\Dc_{CKA}=0$. For any $\epsilon \in [0,1]$, we can construct another set of points $\mathbf{\Tilde{R}_t}$ such that $\Dc_{CKA} (\mathbf{\Tilde{R}_t}, \mathbf{R_s}) \leq \epsilon$, but $\mathbf{\Dc}_{FG} (\mathbf{\Tilde{R}_t}, \mathbf{R_s}) = \sqrt{\epsilon} \norm{\mathbf{R_tR_t^T - J_{n}} }_F$
\end{theorem1}

We start with the assumption that there are $\mathbf{R_s}$ and $\mathbf{R_t}$ such that $\Dc_{CKA} =0$ and $\Dc_{FG} = 0$. This implies that $\mathbf{K_s} = \mathbf{K_t}$.

Now, take $\epsilon \in [0,1]$. We define $\mathbf{\Tilde{K_t}} = (1-\epsilon) \mathbf{K_t} + \epsilon \mathbf{J_n}$ where $\mathbf{J_n}$ is the $n \times n$ all ones matrix. Note that $\mathbf{\Tilde{K_t}}$ is a valid Gram matrix in $d_t$ dimensions. To see this, note than both $\mathbf{K_t}$ and $\mathbf{J_n}$ are p.s.d matrices. Since both $\epsilon \in [0,1]$, $1-\epsilon \geq 0$. So, as conical combination of two positive semi definite matrices are positive semi definite, we know that $\mathbf{\Tilde{K_t}}$ is a positive semi-definite matrix. The rank of $\mathbf{K_t}$ is at most $d_s$ since it is equal to $\mathbf{K_s}$, which is a Gram matrix for vectors in $d_s$ dimension. The rank of  $\mathbf{J_n}$ is 1. So, the subadditivity of matrix rank implies

\begin{align*}
    rank(\mathbf{\Tilde{K_t}}) &\leq rank(\mathbf{K_t}) + rank(\mathbf{J_n}) \\
    &\leq d_s +1 \leq d_t
\end{align*}

The last equality follows since we explicitly require $d_t > d_s$, i.e the teacher feature dimension is greater than the student feature dimension.

Since $\mathbf{\Tilde{K_t}}$ is a psd matrix with $rank(\mathbf{\Tilde{K_t}}) \leq d_t$, there must be a set of $n$ points in $d_t$ whose Gram matrix is $\mathbf{\Tilde{K_t}}$. Furthermore, these vectors are all unit norm. To see this note that every diagonal entry in $\mathbf{\Tilde{K_t}}$ is 1. More explicitly, let $\Tilde{k}_{i,i}$ be the $i$ th diagonal entry of $\mathbf{\Tilde{K_t}}$ and $k_{i,j}$ be the $i$th diagonal entry of $\mathbf{K_t}$. Now, $\forall i \in [1 \dots n]$  $\Tilde{k}_{i,i} = (1 - \epsilon) k_{i,i} + \epsilon  = 1 - \epsilon+\epsilon =1$, where we use the fact that $k_{i,i} =1$, by the construction of $\mathbf{K_t}$

Now, first, we show that $\norm{\mathbf{\Tilde{K_t}} - \mathbf{K_s}}_F = \sqrt{\epsilon} \norm{\mathbf{K_t} - \mathbf{J}_n}_F$. Note that since $\mathbf{K_s} = \mathbf{K_t}$,

\begin{align*}
    \norm{\mathbf{\Tilde{K_t}} - \mathbf{K_s}}_F &= \norm{(1-\epsilon) \mathbf{K_t} + \epsilon\mathbf{J_n}- \mathbf{K_t}}_F \\
    &= \norm{\epsilon (\mathbf{K_t - J_n})}_F  = \sqrt{\epsilon}\norm{ (\mathbf{K_t - J_n})}_F
\end{align*}

On the other hand, let's show that $\Dc_{CKA} \leq \epsilon$. We'll begin by assuming the following identity holds and provide its proof later.

\begin{equation}
    \norm{\Tilde{\mathbf{K_t}}}_F \geq (1-\epsilon) \norm{ \mathbf{K_t}} - \epsilon n \label{eq:hard_one}
\end{equation}

Now, for $\Dc_{CKA} \leq \epsilon$, we must have

\begin{align*}
     1 &- \frac{\langle \mathbf{K_t , \Tilde{K_t}}  \rangle}{ \norm{\mathbf{K_t}}_F \norm{\mathbf{\Tilde{K_t}}}_F} \leq \epsilon \\
     \langle \mathbf{K_t , \Tilde{K_t}}  \rangle &\geq (1-\epsilon) \norm{\mathbf{K_t}}_F \norm{\mathbf{\Tilde{K_t}}}_F \\
     (1- \epsilon) \norm{\mathbf{K_t}}_F^2 + \epsilon \langle \mathbf{K_t}, \mathbf{J_n} \rangle &\geq (1-\epsilon)^2 \norm{\mathbf{K_t}}_F^2 - \epsilon (1-\epsilon)  n \norm{\mathbf{K_t}}_F \\ 
     (1- \epsilon) \epsilon \norm{\mathbf{K_t}}_F^2 + \epsilon \langle \mathbf{K_t, J_n} \rangle &\geq -\epsilon n (1-\epsilon) \norm{\mathbf{K_t}}_F
\end{align*}

This inequality is trivially true since the left hand contains entries that are non-negative, whereas the right hand is always negative.

Now, we prove the identity form \ref{eq:hard_one}. We start by noting that by the definition of $\mathbf{\Tilde{K}_t}$, $\epsilon \mathbf{J_n} = \mathbf{\Tilde{K}_t} - (1-\epsilon) \mathbf{K_t}$. So, taking the squared Frobenius norm on both sides, we get,

\begin{align*}
    \epsilon^2 n^2 &= \norm{\mathbf{\Tilde{K}_t} - (1-\epsilon) \mathbf{K_t}}_F^2 \\
    &= \norm{\mathbf{\Tilde{K}_t}}_F^2 - 2 (1-\epsilon)\langle  \mathbf{\Tilde{K_t}}, \mathbf{K_t}\rangle + (1-\epsilon)^2 \norm{K_t}_F^2 \\
    &\geq  \norm{\mathbf{\Tilde{K}_t}}_F^2 - 2(1-\epsilon) \norm{\mathbf{\Tilde{K}_t}}_F  \norm{\mathbf{K_t}}_F +  (1-\epsilon) \mathbf{K_t}_F^2 \\
    &= \left( \norm{\Tilde{K_t}}_F -  (1-\epsilon) \norm{K_t}_F \norm{\Tilde{K_t}}_F -  (1-\epsilon) \norm{K_t}_F \right)^2 \\
    \epsilon n &\geq \abs{\norm{\Tilde{K_t}}_F -  (1-\epsilon) \norm{K_t}_F }
\end{align*}

Hence, $ (1-\epsilon) \norm{K_t}_F - \epsilon n  \leq \norm{\Tilde{K_t}}_F$, thus completing the proof.

\subsection{Spectral assumption between $\mathbf{R_s}$ and $\mathbf{P}$ in Theorem 2}

We expand on the discussion in the remarks of Theorem 2.

In proving the theorem, we showed that assuming $\Dc_{LinProj}=0$,  $\Dc_{FG}=0$ only if  $\mathbf{R_s}(\mathbf{I_{d_s} - PP^T}) =0$. This equation is true trivially if (i) $\mathbf{R_s}=0$, (ii) $\mathbf{PP^T} = \mathbf{I_{d_s}}$, which means $\mathbf{P} \in S(d_s, d_t)$.  We now elaborate a third condition, which is the most general. We must make an assumption that the row-space of $\mathbf{R_s}$ is contained entirely within the left eigenspace of $\mathbf{PP^T}$ with the eigenvalue of 1. More precisely:

\begin{lemma}
    For a matrix $\mathbf{P} \in \R^{d_s \times d_t}$ with a spectral decomposition, $\mathbf{P = U \Sigma V^T}$ let $ \Uc = span( \mathbf{U_{\sigma=i}})$ be the space spanned by the columns of $\mathbf{U}$ with a corresponding singular value of 1. $\mathbf{R_s}(I_{d_s} - PP^T=0)$ if and only if $Row(\mathbf{R_s}) \subseteq \Uc$.
\end{lemma}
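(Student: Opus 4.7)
The plan is to use the singular value decomposition of $\mathbf{P}$ to diagonalize $\mathbf{PP^T}$, identify the null space of $\mathbf{I_{d_s} - PP^T}$ with $\Uc$, and then observe that the row-by-row interpretation of $\mathbf{R_s}(\mathbf{I_{d_s} - PP^T}) = 0$ is exactly the containment statement.

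First, I would substitute $\mathbf{P = U \Sigma V^T}$ and compute $\mathbf{PP^T = U \Sigma \Sigma^T U^T}$, noting that $\mathbf{\Sigma \Sigma^T}$ is a $d_s \times d_s$ diagonal matrix with entries $\sigma_i^2$ on the diagonal. This is an eigendecomposition of $\mathbf{PP^T}$, whose eigenvectors are the columns of $\mathbf{U}$ with corresponding eigenvalues $\sigma_i^2$. Consequently, $\mathbf{I_{d_s} - PP^T} = \mathbf{U}(\mathbf{I_{d_s}} - \mathbf{\Sigma \Sigma^T}) \mathbf{U^T}$, which shows that $u_i$ is in the null space of $\mathbf{I_{d_s} - PP^T}$ iff $\sigma_i^2 = 1$, and (since singular values are non-negative) iff $\sigma_i = 1$. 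Therefore, the null space of $\mathbf{I_{d_s} - PP^T}$ is exactly $\Uc$.

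Next, I would translate the matrix equation $\mathbf{R_s}(\mathbf{I_{d_s} - PP^T}) = 0$ into a statement about rows. Since $\mathbf{I_{d_s} - PP^T}$ is symmetric, the equation holds iff $(\mathbf{I_{d_s} - PP^T})r = 0$ for every row $r$ of $\mathbf{R_s}$ (viewed as a column vector in $\R^{d_s}$), i.e., every row of $\mathbf{R_s}$ lies in the null space of $\mathbf{I_{d_s} - PP^T}$. By the previous step this null space is $\Uc$, and since $\Uc$ is a linear subspace, the set of rows lies in $\Uc$ iff their span lies in $\Uc$, giving $\mathrm{Row}(\mathbf{R_s}) \subseteq \Uc$.

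The argument is essentially routine linear algebra, so there is no genuine obstacle; the only points requiring a bit of care are (i) making sure to use the \emph{full} SVD so that $\mathbf{U}$ forms an orthonormal basis of $\R^{d_s}$, and (ii) correctly converting between singular values of $\mathbf{P}$ and eigenvalues of $\mathbf{PP^T}$ (the two are related by squaring, so $\sigma_i = 1$ is equivalent to eigenvalue $1$). Once these are in place, the forward and reverse directions follow simultaneously from the null-space characterization above.
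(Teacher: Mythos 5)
Your proof is correct and follows essentially the same route as the paper's: diagonalize $\mathbf{PP^T}$ via the SVD of $\mathbf{P}$, identify the relevant subspace (you phrase it as the null space of $\mathbf{I_{d_s}} - \mathbf{PP^T}$, the paper as the left-eigenspace $E_1$ of $\mathbf{PP^T}$ with eigenvalue $1$), and convert the matrix identity into a row-by-row membership statement. Your version is marginally cleaner because it invokes the symmetry of $\mathbf{I_{d_s}} - \mathbf{PP^T}$ directly instead of taking the paper's detour through left-eigenvectors, which incidentally contains a spurious claim that $\mathbf{U}$ is symmetric.
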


\begin{proof}
First, we start by showing that $Row(\mathbf{R_s)} \subseteq E_1 \iff \mathbf{R_s PP^T} = \mathbf{R_s} $ where $E_1 = span(\mathbf{E_{\lambda=1}})$ is the left-eigenspace of $\mathbf{PP^T}$ with eigenvector $\lambda=1$

For the forward direction,  we use the definition of a left-eigenvector. Let $s^T$ be a row of $\mathbf{R_s}$. Since, this is in the rowspace of $\mathbf{R_s}$ and therefore, also in $E_1$, we have $s^T \mathbf{PP^T} = \lambda s^T = s^T$. So, $\mathbf{R_s PP^T} = \mathbf{R_s}$.

For the reverse direction,  we note that for every row $s^T$ from $S$, since $\mathbf{R_s PP^T = R_s }$, $s^T \mathbf{PP^T}= \lambda s^T$, i.e $s^T$ is in $E_1$. Since this holds for every row in $\mathbf{R_s}$, it must also hold for the row-space of $\mathbf{R_s}$. Hence $Row(\mathbf{R_s}) \subseteq E_1$

Now, we conclude by simply restating that the left-eigenspace of $E_1$ is the same space as $\Uc$. We do this by noting that $\mathbf{U}$ is the same as the matrix of eigenvectors of $\mathbf{PP^T}$. Since, $\mathbf{PP^T}$ is square symmetrical matrix, we have that the left-eigenvectors of $\mathbf{PP^T}$ are the transpose of it's right eigenvectors. In this case, since $\mathbf{U}$ is symmetric $\mathbf{U^T = U}$. So, $\mathbf{U_{\sigma=1}} = \mathbf{E_{\lambda=1}}$, and consequently, $E_1 = \Uc$
\end{proof}

This is a generalization of the theorem we present in the main paper. In particular if $\mathbf{P} \in S(d_s, d_t)$, then $\mathbf{\Uc} = \R^{d_s}$. So, any set of vectors in $d_s$ dimension will be included in the left singular vector space of $\mathbf{P}$. This clarification implies that in the case that a structural condition is imposed on the student vectors (for instance, their row space spans a small subspsace), we can get an optimal linear projector that is not in $S(d_s, d_t)$. We view this as a largely unrealistic scenario in practical distillation settings, though it may offer value from a theoretical or analytical perspective.
\subsection{Proof of Theorem 3}

We restate the theorem below:

\begin{theorem3}

Let $\mathbf{R_t}$ and $\mathbf{R_s}$ be centered, unit norm matrix of feature activations. $\Dc_{\Pc} = 0 \iff \Dc_{FG} = 0$
\end{theorem3}

\begin{proof}
    We use a classical result relating the nuclear norm with singular value decomposition. The proof for this can be found in \cite{Horn_Johnson_1991} Theorem 3.4.1

    \begin{equation*}
        \norm{\mathbf{R_s^T R_t}}_* = \max_{\mathbf{U}, \mathbf{V}} \tr\left( \mathbf{U R_s^T R_t V} \right)
    \end{equation*}
where $\mathbf{U} \in \R^{d_s \times d_s}$ and $\mathbf{U^T U=I_{d_s}}$ while $\mathbf{V} \in \R^{d_t \times d_s}$ and $\mathbf{V^T V=I_{d_s}}$. We will use the fact the that $\mathbf{U}$ and $\mathbf{V}$ that maximize the above expressions are exactly the matrices from the singular value decomposition of $\mathbf{R_s^T R_t}= \mathbf{U} \Sigma \mathbf{V}^T$

Plugging this into the definition of $\Dc_{\Pc}$, and using the fact that $\tr(\mathbf{XX^T}) = \norm{\mathbf{X}}_F$, we get
\begin{align*}
    \Dc_{\Pc} &= \norm{\mathbf{R_s}}_F + \norm{\mathbf{R_t}}_F - 2 \tr\left( \mathbf{U R_s^T R_t V} \right) \\
    &= \norm{\mathbf{R_s U} - \mathbf{R_t V}}_F
\end{align*}

First, we show that if $\Dc_{\Pc}=0 \implies \Dc_{FG}= 0$. If $\Dc_{\Pc}=0$, this means $\mathbf{R_sU}= \mathbf{R_tV}$, so that $\mathbf{R_s} = \mathbf{R_tVU^T}$. Now, $\mathbf{R_sR_s^T}= \mathbf{R_t V V^T R_t^T}$. Since $\mathbf{V}$ has orthonormal columns, $\mathbf{VV^T}$ is a projection matrix on the column space of $\mathbf{V}$. If we show that the row-space of $\mathbf{R_t}$ is a subspace of the column space of $\mathbf{V}$, we can say that $\mathbf{R_tVV^TR_t^T =R_tR_t^T}$.

To see this we replace $\mathbf{R_s = R_tVU}$ in the SVD of $\mathbf{R_s^T R_t}$, we get

\begin{align*}
    \mathbf{U^T V^T R_t^T R_t} &= \mathbf{U \Sigma V^T} \\
    \mathbf{V^T R_t^T R_t} &= \mathbf{\Sigma \mathbf{V^T}}
\end{align*}

So, the columns of $\mathbf{V}$ are the eigenvectors of $\mathbf{R_t^T R_t}$, which implies that the columns of $\mathbf{V}$ where the singular values in $\mathbf{\Sigma}$ are non-zero must be column space of $\mathbf{R_t^T}$, or the row-space of $\mathbf{R_t}$. Hence, the row space of $\mathbf{R_t}$ is contained within the column space of $\mathbf{V}$.

Finally, for the reverse direction, we assume $\Dc_{FG}= 0$, which implies $\tr( \mathbf{R_sR_s^T})= \tr(\mathbf{R_tR_t^T})$. We need to show then that $\tr(\mathbf{R_sR_s^T}) = \norm{\mathbf{R_s^TR_t}}_*$ to conclude that $\Dc_{FG}= 0 \implies \Dc_{\Pc} = 0$

Let $\mathbf{R_s} = \mathbf{U_s \Sigma_s V_s^T}$ and $\mathbf{R_t} = \mathbf{U_t \Sigma_t V_t^T}$ be the SVD decompositions. Since $\mathbf{R_s R_s^T}= \mathbf{U_s \Sigma_s^2 U_s^T} = \mathbf{U_t \Sigma_t^2 U_s^T}$, $\mathbf{U}_s = \mathbf{U}_t$ and $\mathbf{\Sigma}_s = \mathbf{\Sigma}_t$. Hence $\mathbf{\tr(R_sR_s^T)} = \tr(\mathbf{\Sigma_s^2})$

Now, $\mathbf{R_s^T R_t}= \mathbf{V_s \Sigma_s U_s^T U_s \Sigma_s V_t^T}= \mathbf{V_s \Sigma_s^2 V_t^T}$. $\norm{\mathbf{R_s^T R_t}}_*$ is just simply the sum of singular value, i.e $\tr(\mathbf{\Sigma_s}^2)$
\end{proof}

\section{Synthetic experiment}

\subsection{Proof of teacher vector construction}

First, we prove that our construction of teacher vectors ensures that they are $\epsilon-$orthogonal to each other with high probability. More precisely:

\begin{lemma}
    Let $\mathbf{v_i} = [v_{i1}, v_{i2} \dots v_{in}] \in \R^n $ such that each $v_{ij} = 1/\sqrt{n}$ with probability $1/2$ and $-1/\sqrt{n}$ with probability 1/2 independent of all other $v_{ij}$. For any $\epsilon >0$ assume that we generate $k = 2^{c \epsilon^2 n}$ such vectors, with $c=\frac{1}{4 \ln (2)}$ . Then we have $\Pr\left\{ \exists i,j  \mid  \abs{\langle \mathbf{v_i}, \mathbf{v_j} \rangle} \geq \epsilon \right\} \leq \frac{1}{e^{\epsilon^2 n}}$ 
\end{lemma}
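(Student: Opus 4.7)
The argument is a standard concentration-plus-union-bound construction in the spirit of Johnson--Lindenstrauss, and the plan has three short steps.

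First, I would reduce a fixed pairwise inner product to a Rademacher sum. For any $i \neq j$, write $\langle \mathbf{v_i}, \mathbf{v_j}\rangle = \frac{1}{n}\sum_{m=1}^{n} \sigma_m$ where $\sigma_m := n\, v_{im} v_{jm}$. Because $v_{im}$ and $v_{jm}$ are independent Rademacher signs rescaled by $1/\sqrt{n}$, each product $\sigma_m$ is itself a $\pm 1$ Rademacher variable, and the $\sigma_m$ are mutually independent across $m$ by the coordinate-wise independence assumption. Thus $n \langle \mathbf{v_i},\mathbf{v_j}\rangle$ is a sum of $n$ i.i.d.\ Rademacher variables, a textbook subgaussian object.

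Second, I would apply Hoeffding's inequality to obtain a per-pair tail estimate: $\Pr\bigl[|\langle \mathbf{v_i},\mathbf{v_j}\rangle| \geq \epsilon\bigr] = \Pr\bigl[|\sum_m \sigma_m| \geq n\epsilon\bigr] \leq 2\exp(-n\epsilon^2/2)$.

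Third, I would union-bound across the $\binom{k}{2} \leq k^2/2$ unordered pairs, giving $\Pr\bigl[\exists\, i\neq j : |\langle \mathbf{v_i},\mathbf{v_j}\rangle| \geq \epsilon\bigr] \leq k^2 \exp(-n\epsilon^2/2)$. Substituting $k = 2^{c\epsilon^2 n} = \exp(c\epsilon^2 n \ln 2)$ rewrites the right-hand side as $\exp\bigl(n\epsilon^2(2c\ln 2 - 1/2)\bigr)$, and the value $c = 1/(4\ln 2)$ is calibrated so that the union-bound inflation from $k^2$ is absorbed by the Hoeffding exponent, leaving the claimed bound of $\exp(-n\epsilon^2)$ after invoking the precise Rademacher-sum tail.

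The whole argument is routine, so the main obstacle is purely constant-chasing in the last step: one must verify that the chosen $c$ is tight enough that the combined exponent $n\epsilon^2(2c\ln 2 - 1/2)$ sits at or below $-n\epsilon^2$, which requires using the sharpest available form of the Rademacher tail (equivalently, a Khintchine-type bound) rather than the vanilla subgaussian constant $1/2$. Beyond this bookkeeping, the underlying mechanism --- that random sign vectors in $\R^n$ are pairwise almost orthogonal, and that one can pack exponentially many of them in the ambient dimension --- is exactly the quantitative fact that justifies the teacher-side construction used in the synthetic experiment, and no deeper technique is needed.
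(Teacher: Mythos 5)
Your plan follows the paper's own proof almost exactly --- reduce each pairwise inner product to a Rademacher sum, apply Hoeffding, and union-bound over the $\binom{k}{2}$ pairs --- so the \emph{approach} matches. The issue is that you correctly carry out the arithmetic and, by doing so, expose a genuine flaw that the paper's version papers over. Your per-pair Hoeffding bound $2\exp(-n\epsilon^2/2)$ is the right one: each summand lives in $[-1/n,1/n]$, so $\sum_k (b_k-a_k)^2 = 4/n$ and the exponent is $-2\epsilon^2/(4/n) = -n\epsilon^2/2$. The paper, by contrast, writes the denominator as $\sum_{i=1}^n 2/n^2 = 2/n$ (i.e.\ it drops a factor of two in $(b_k-a_k)^2$) and thereby obtains the twice-as-strong $2\exp(-n\epsilon^2)$. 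Even with that inflated per-pair bound, the paper's union bound only lands at $\exp\!\left(\epsilon^2 n\left(\tfrac{\ln 2}{2}-1\right)\right)\approx \exp(-0.65\,\epsilon^2 n)$, which is strictly weaker than the claimed $e^{-\epsilon^2 n}$; the proof is then declared complete without comment.

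The more serious problem is the one you flag but do not resolve. With the correct Hoeffding constant and the stated $c = \tfrac{1}{4\ln 2}$, your combined exponent is $n\epsilon^2(2c\ln 2 - 1/2) = 0$, i.e.\ the union bound gives the trivial estimate $\Pr\{\exists\,i,j\} \leq 1$. Your proposed escape --- invoking a ``sharpest available Rademacher tail'' or a Khintchine-type bound to beat the subgaussian constant $1/2$ --- does not exist: for a sum $S_n$ of $n$ i.i.d.\ Rademacher signs the Chernoff/MGF calculation gives $\E[e^{\lambda\sigma}]=\cosh\lambda\leq e^{\lambda^2/2}$, and the resulting $\Pr\{|S_n|\geq t\}\leq 2e^{-t^2/(2n)}$ is tight in the exponent; Khintchine inequalities control moments, not tails, and do not improve this. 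Consequently the lemma as stated cannot be proved by this method, and indeed with $k=2^{c\epsilon^2 n}=e^{\epsilon^2 n/4}$ vectors the expected number of $\epsilon$-violating pairs is $\binom{k}{2}\cdot\Theta(e^{-n\epsilon^2/2})=\Theta(1)$, so the failure probability is bounded away from zero rather than exponentially small. A correct version requires either a strictly smaller constant (e.g.\ $c=\tfrac{1}{8\ln 2}$, yielding final exponent $-n\epsilon^2/4$) or a weaker conclusion; the qualitative message driving the synthetic experiment --- that exponentially many nearly-orthogonal sign vectors fit in $\mathbb{R}^n$ --- survives, but the precise statement and the paper's proof of it do not.
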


\begin{proof}

    We assume $i \neq j$ throughout the proof.
    
    First we note that by the linearity of expectation and the independence of each term within the vectors,

    \begin{equation*}
        \E \langle \mathbf{v_{i}, \mathbf{v_j}} \rangle = \sum_{k=1}^n \E {v_{ik} \cdot v_{jk}} = \sum_{i]1}^n \E{v_{ik}} \E{v_{jk}} = 0
    \end{equation*}
\end{proof}

Now, use Hoeffding's concentration inequality \cite{Hoeffding01031963} to bound the inner product for a particular pair $i,j$. We use the fact that the inner product is a sum of random variables with zero expectation and each term is bounded below by $-\frac{1}{n}$ and above by $\frac{1}{n}$ to claim

\begin{align*}
    \Pr\left\{  \abs{\langle \mathbf{v_i}, \mathbf{v_j} \rangle} \geq \epsilon \right\} &\leq 2\exp{-\frac{2\epsilon^2}{\sum_{i=1}^n 2/n^2}} \\
     &= 2\exp(-\epsilon^2 n)
\end{align*}

Now, we use an union bound over all $k \choose 2$ $\leq \frac{k^2}{2}$ to claim that 

\begin{align*}
    \Pr\left\{ \exists i,j  \mid  \abs{\langle \mathbf{v_i}, \mathbf{v_j} \rangle} \geq \epsilon \right\} &\leq \frac{k^2}{2} 2\exp(-\epsilon^2 n) \\
    & = \exp( -\epsilon^2n + 2c \epsilon^2 n\ln(2))  \\
    & = \exp\left(  \epsilon^2 n \left( \frac{\ln(2)}{2} -1 \right) \right) \\ 
    &= \exp{-0.65 \epsilon^2 n}
\end{align*}

Hence, since the probability of any two vectors having an inner product greater than $\epsilon$ decays exponentially as $n$ grows, which means in high dimensions, our teacher vector constructions will be $\epsilon-$ orthogonal to each other with high probability.

\subsection{Luby's algorithm}

We use Luby's algorithm \cite{luby1985simple} as a simple and efficient proxy for the exact number of vectors that are $\epsilon$- orthogonal to each other. We use the Gram matrix and transform it into an adjacency matrix of a graph where two vectors share an edge if their inner product is greater than $\epsilon$. The maximal independent set problem from graph theory can now be applied to this problem to idenify the maximum number of vectors such that none of them have inner product higher than $\epsilon$.

We give the pseudocode for Luby's algorithm below:
\begin{algorithm}
\caption{Luby's Algorithm for Maximal Independent Set (MIS)}
\label{alg:luby}
\begin{algorithmic}[1]
\Procedure{LubyMIS}{$G=(V,E)$}
    \State $I \leftarrow \emptyset$
    \State $V' \leftarrow V$ 
    \While{$V' \neq \emptyset$}         \Comment{Step 1: Assign random priorities to active nodes}

        \ForAll{$v \in V'$ \textbf{in parallel}} 
            \State Assign a random priority $p(v)$
        \EndFor
        \State $S \leftarrow \emptyset$ \Comment{Initialize a temporary set for newly selected nodes}
        
        \ForAll{$v \in V'$ \textbf{in parallel}}     \Comment{Step 2: Select nodes with a higher priority than all their active neighbors}
            \If{$p(v) > p(u)$ for all neighbors $u \in N(v) \cap V'$}
                \State $S \leftarrow S \cup \{v\}$
            \EndIf
        \EndFor
        \State $I \leftarrow I \cup S$ \Comment{Step 3: Update the MIS and the set of remaining nodes}
        \State $V' \leftarrow V' \setminus (S \cup N(S))$ \Comment{Remove $S$ and its neighbors from $V'$}
    \EndWhile
    \State \Return{$I$}
\EndProcedure
\end{algorithmic}
\end{algorithm}

\begin{figure}
    \centering
    \includegraphics[width=0.8\linewidth]{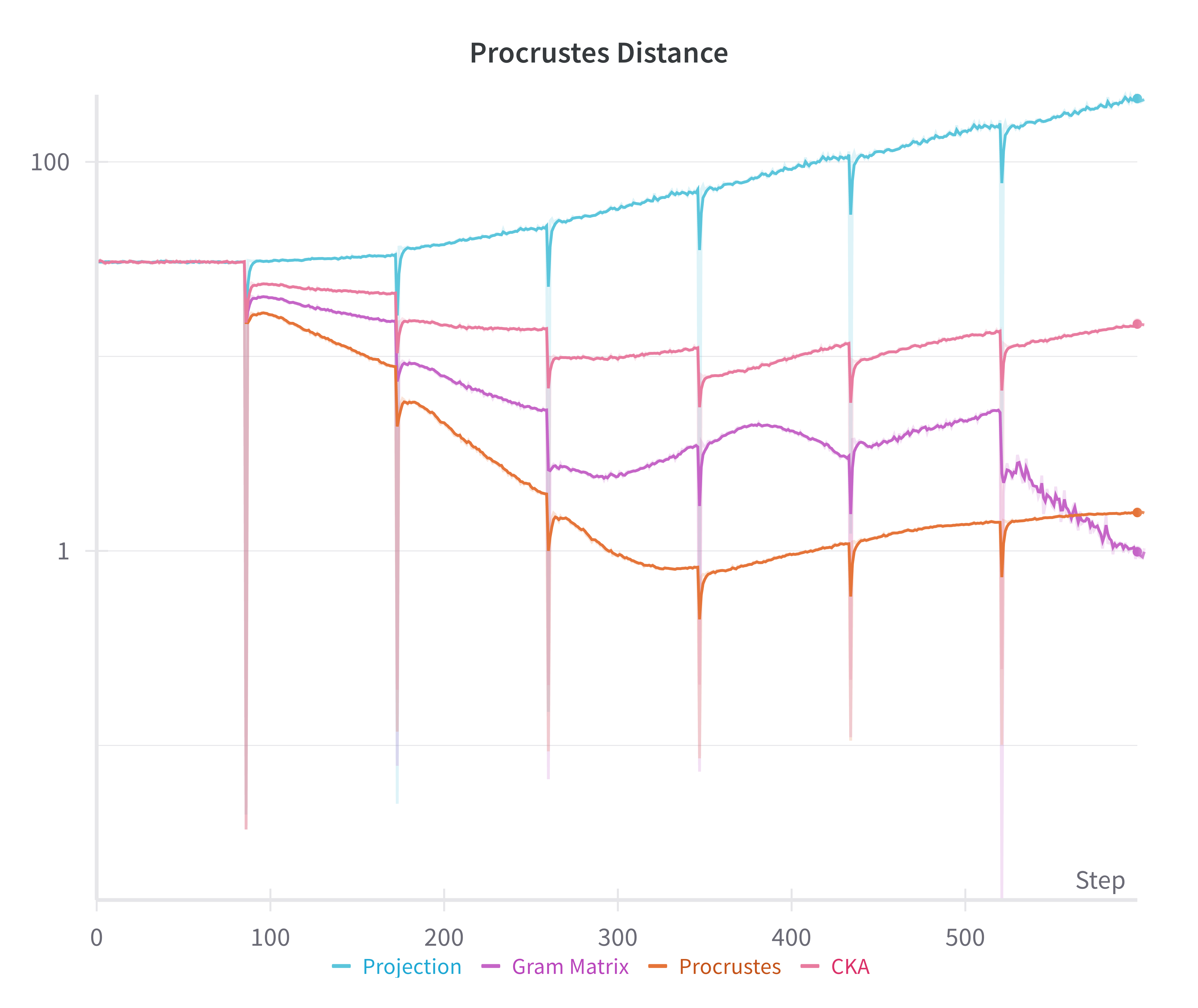}
    \caption{Dynamics of Procrustes distance throughout the synthetic training process when student vectors are initialized from a random projection}
    \label{fig:procrustes}
\end{figure}

\begin{figure}
    \centering
    \includegraphics[width=0.8\linewidth]{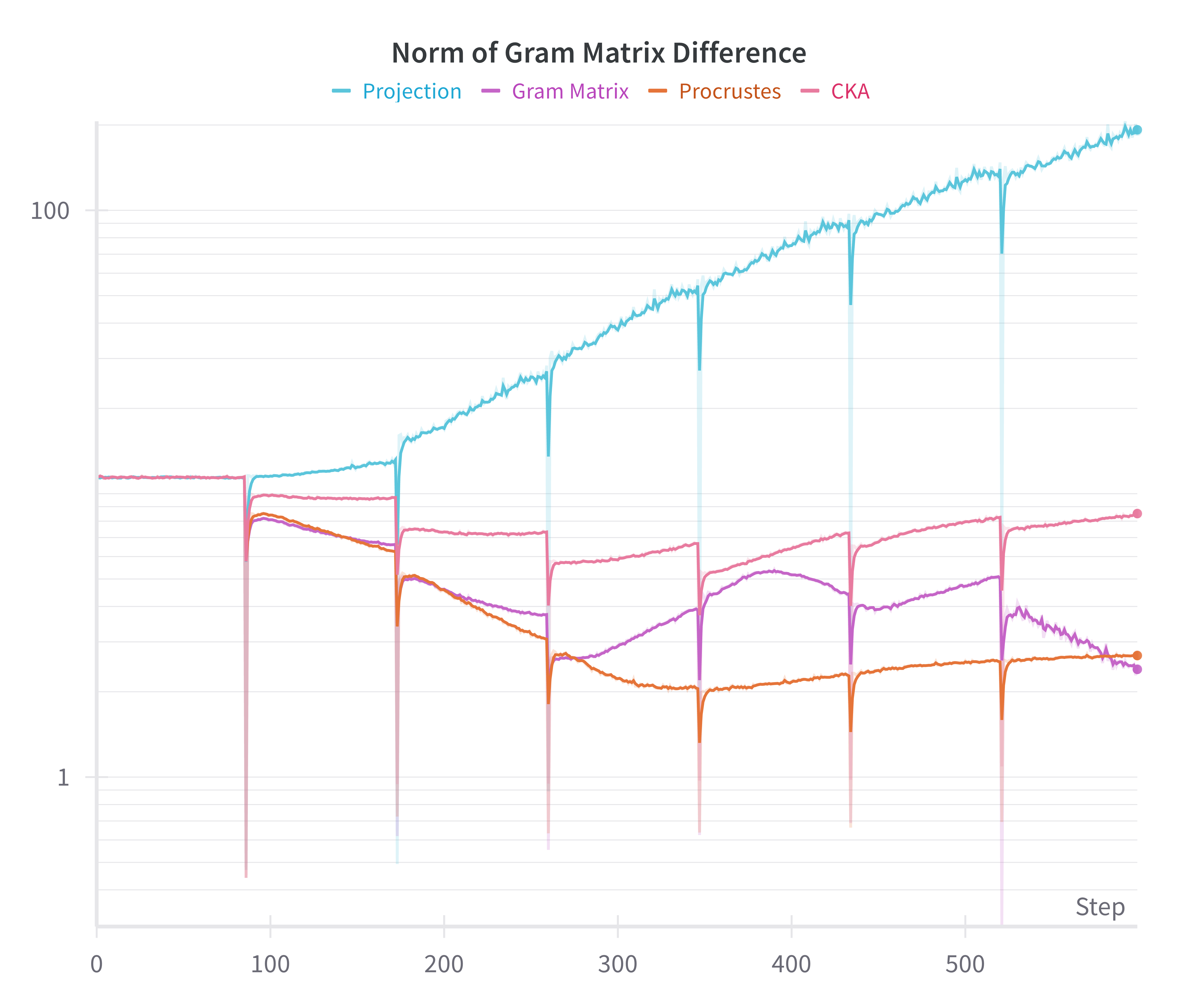}
    \caption{Dynamics of the norm of the difference in Feature Gram matrices throughout the synthetic training process when student vectors are initialized from a random projection}
    \label{fig:fg}
\end{figure}

\begin{figure}
    \centering
    \includegraphics[width=0.8\linewidth]{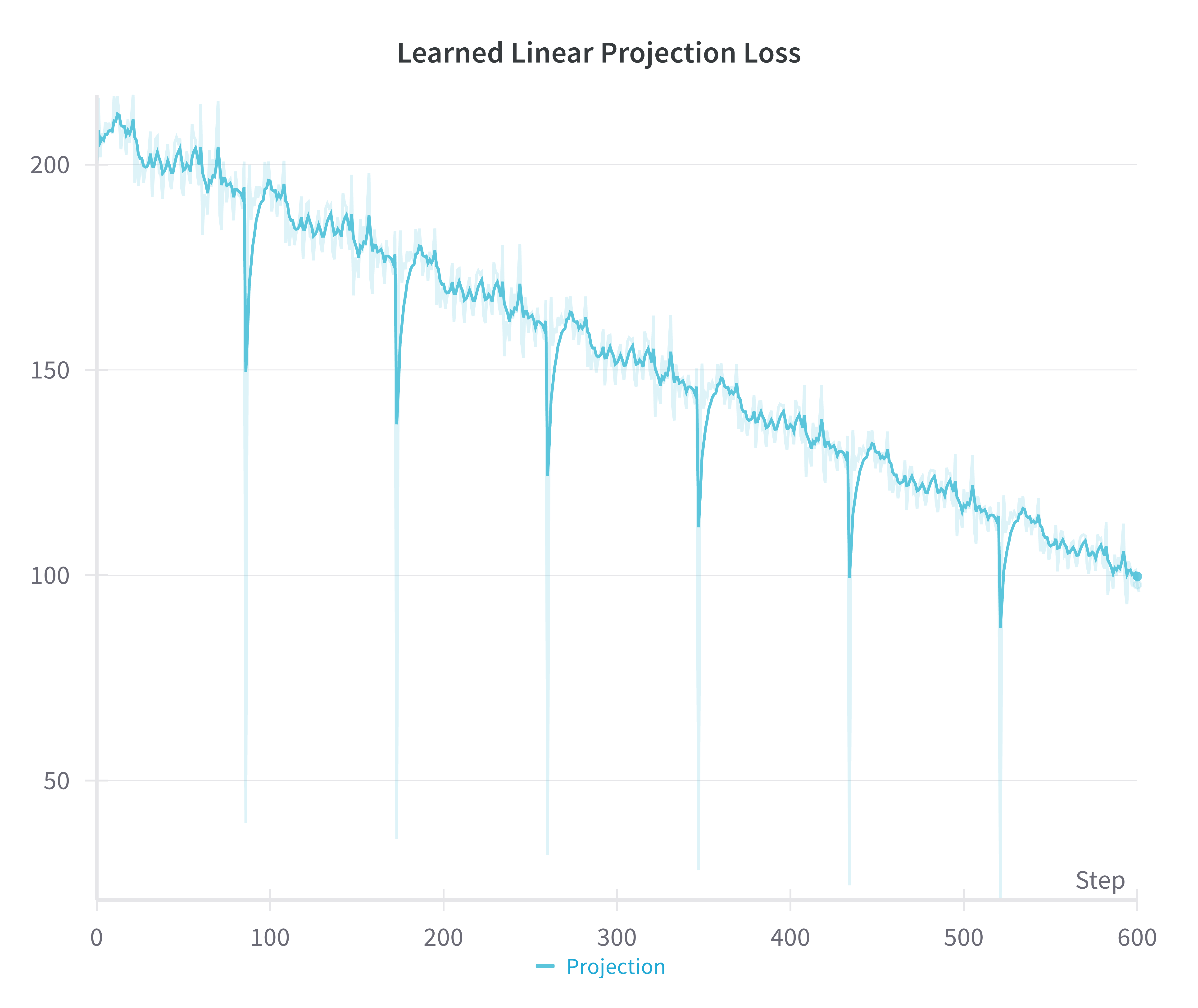}
    \caption{Dynamics of the learned linear projection loss throughout the synthetic training process when student vectors are initialized from a random projection.}
    \label{fig:proj}
\end{figure}
\subsection{Loss Curves for Different Objectives}

In the main text, we present the graphs with the dynamics of CKA and the number of approximate orthogonal vectors over the trainnig process. In this section we include graphs of all the measures we optimize through including Procrustes (Figure \ref{fig:procrustes}), Feature Gram (Figure \ref{fig:fg}) and learned linear projection (Figure \ref{fig:proj})

Note that we do not compute the learned linear projection when optimizing with any other measure, as doing so without learning the linear projection itself would not be meaningful.

\subsection{Results when student vectors are randomly initialized}

In the main text, we present the results when $\mathbf{R_s} = \mathbf{R_t P} $ where $\mathbf{P}$ is a  randomly initialized matrix. Instead, we could have initialized $\mathbf{R_s}$ as completely random unit norm vectors. In this section, we present the results for that case.

\begin{figure}
    \centering
    \includegraphics[width=0.8\linewidth]{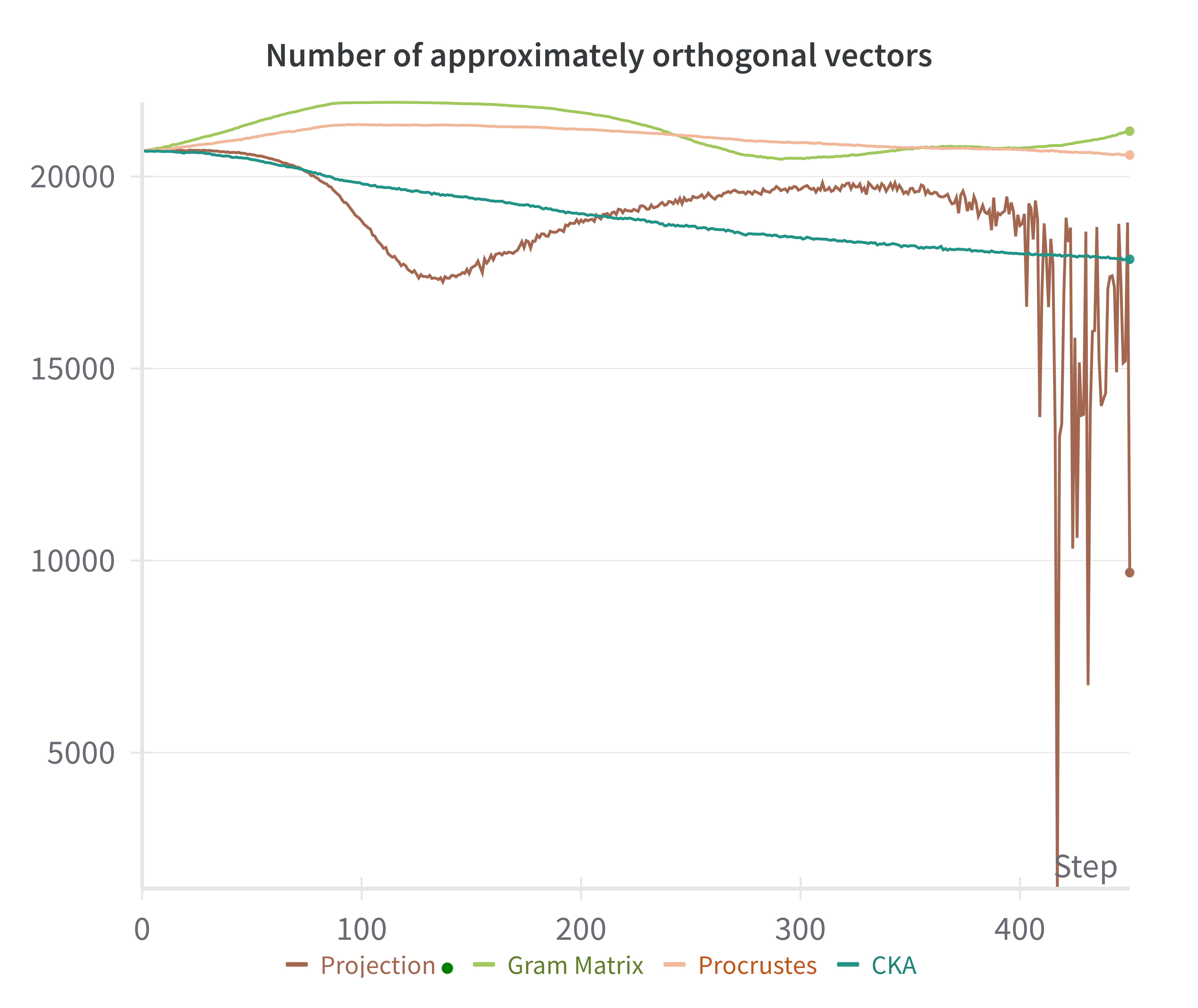}
    \caption{Dynamics of the number of approximate orthogonal vectors through the synthetic training process when the student vectors are randomly initialized}
    \label{fig:orth}
\end{figure}

The number of approximate orthogonal vectors are shown in Figure \ref{fig:orth}. The dynamics of Procrustes loss (Figure \ref{fig:proc_2}, CKA loss (Figure \ref{fig:cka_2}) and feature Gram matrix loss (Figure \ref{fig:fg_2}) are also shown

\begin{figure}
    \centering
    \includegraphics[width=0.8\linewidth]{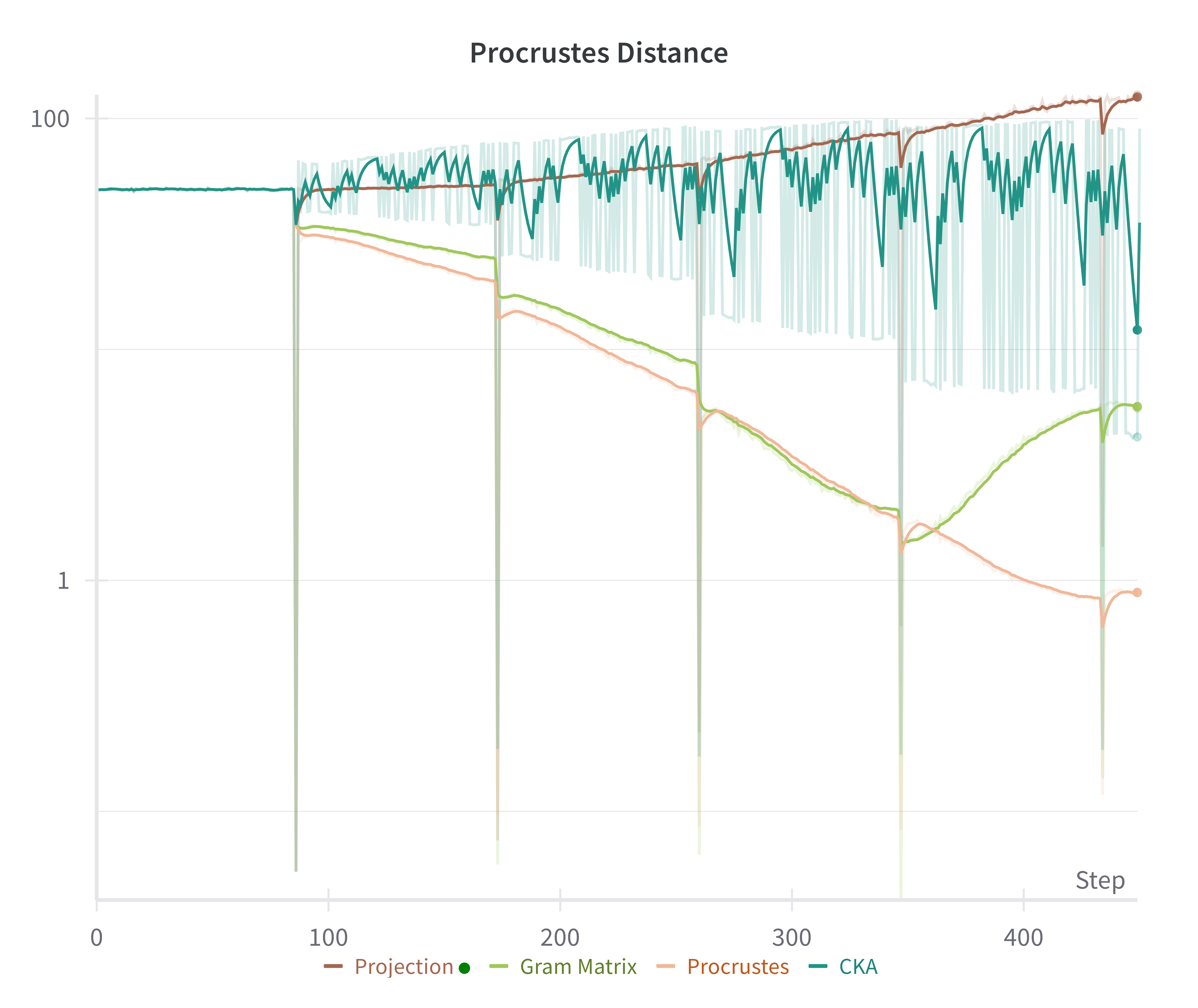}
    \caption{Dynamics of Procrustes distance through the synthetic training process when the student vectors are randomly initialized}
    \label{fig:proc_2}
\end{figure}

\begin{figure}
    \centering
    \includegraphics[width=0.8\linewidth]{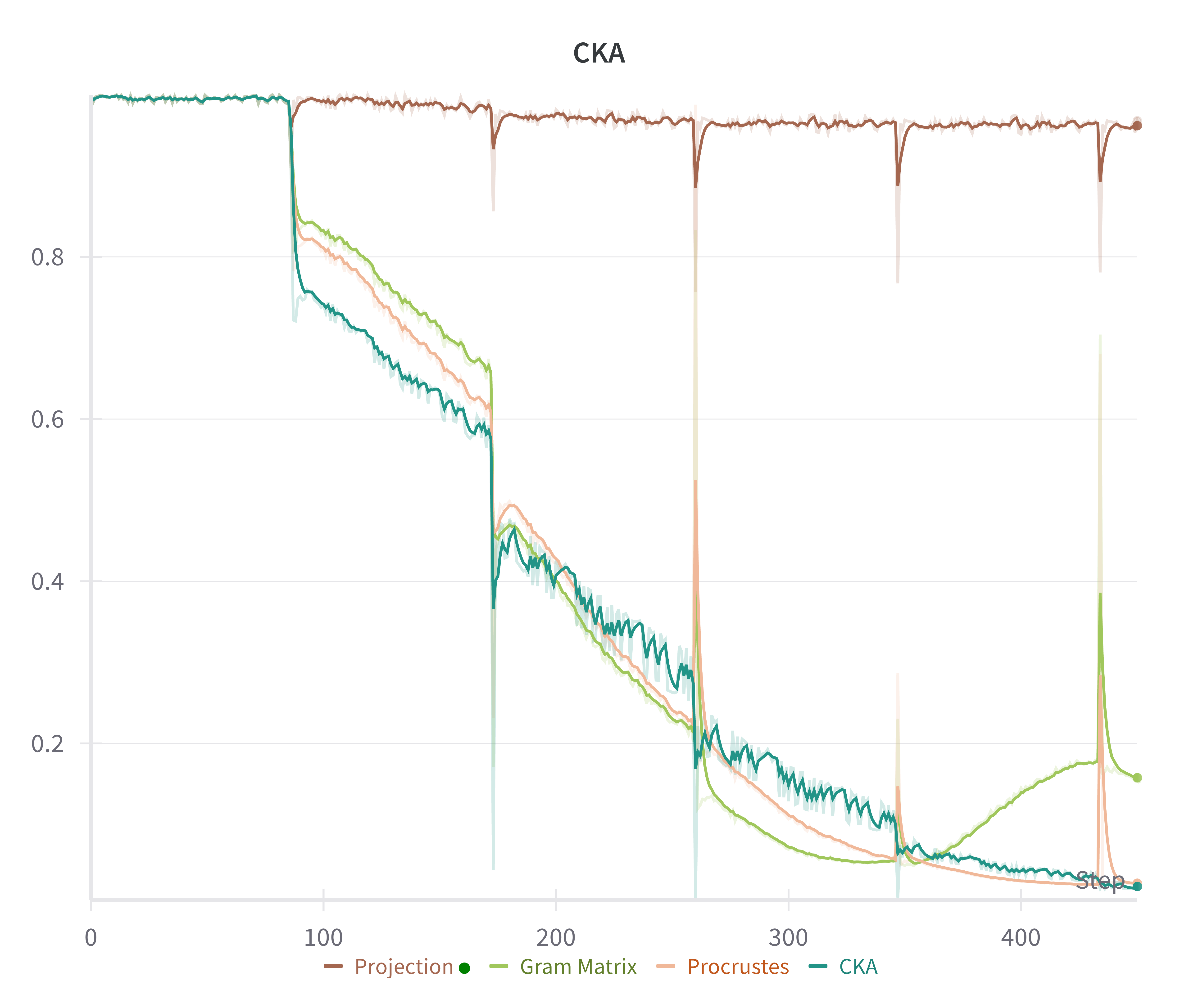}
    \caption{Dynamics of CKA through the synthetic training process when the student vectors are randomly initialized}
    \label{fig:cka_2}
\end{figure}

\begin{figure}
    \centering
    \includegraphics[width=0.8\linewidth]{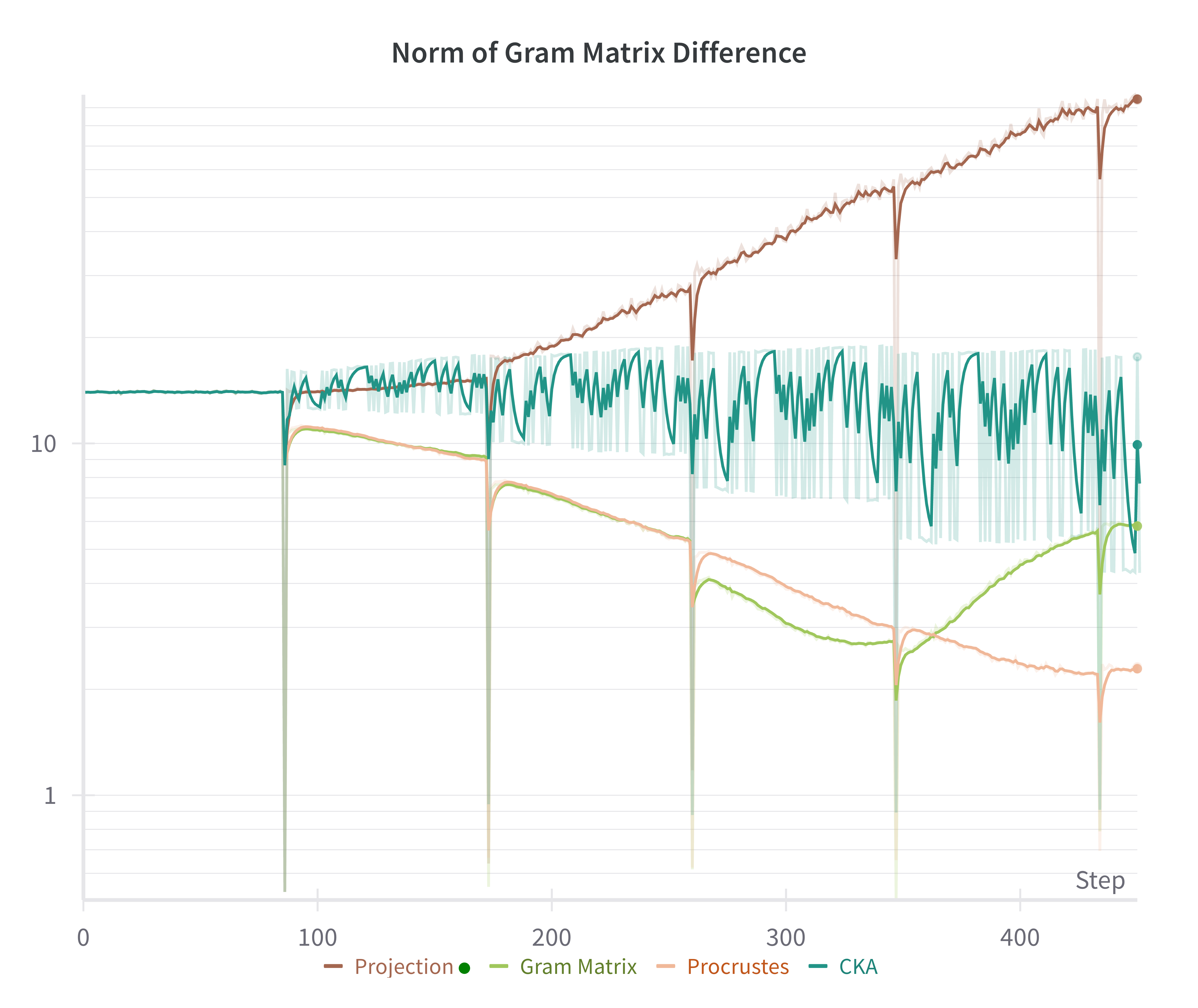}
    \caption{Dynamics of the norm of the difference in Feature Gram matrices through the synthetic training process when the student vectors are randomly initialized}
    \label{fig:fg_2}
\end{figure}

Note that the key takeaways are still the same; CKA and learned linear projection are incapable of preserving the feature geometry, despite having low losses. We notice the same noisy, fluctuating number of approximately orthogonal vectors with the norm of the Gram matrix, while Procrustes is similarity stable. A key difference in this case is that even for Procrustes and Frobenius norm of the Gram matrix, the performance seems to taper off as we keep optimizing. This is likely due to the fact that randomly generated student vectors are already nearly orthogonal at initialization, thereby limiting the extent to which the optimization process can further increase their orthogonality.

\section{Instruction following task}

\subsection{Prompt template}

During both training and evaluation, we use the following prompt wrapper to ensure uniformity across the various datasets.

\begin{figure}[h]
    \begin{tcolorbox}
    Below is an instruction that describes a task. \\
    Write a response that appropriately completes the request. \\ \\
    \#\#\# Instruction: \\
    \{instruction\} \\ \\
    \#\#\# Input: \\
    \{input\} \\ \\
    \#\#\# Response:
    \end{tcolorbox}
    \caption{The prompt wrapper for training and evaluation.}
    \label{fig:prompt_wrapper}
\end{figure}

\end{document}